\newtheorem{theorem}{Theorem}[section]
\newtheorem{defn}{Definition}[section]
\newtheorem{lemma}{Lemma}[section]
\newtheoremstyle{claim}
  {\topsep}
  {\topsep}
  {\itshape}
  {}
  {\itshape}
  {.}
  {.5em}
  {\thmname{#1}\thmnumber{ #2}\thmnote{ (#3)}}
\theoremstyle{claim}
\DeclareMathOperator*{\argmin}{arg\,min}
\title{\texttt{LEATHER}: A Framework for Learning to Generate Human-like Text in Dialogue}
\author{Anthony Sicilia$^1$ \and Malihe Alikhani$^{1,2}$ \\
\{\texttt{anthonysicilia}, \ \texttt{malihe}\}@\texttt{pitt.edu} \\
$^1$Intelligent Systems Program and $^2$Computer Science Department \\
University of Pittsburgh, Pittsburgh, PA, USA
}
\begin{document}
\maketitle
\begin{abstract}
Algorithms for text-generation in dialogue can be misguided. For example, in task-oriented settings, reinforcement learning that optimizes only task-success can lead to abysmal lexical diversity. We hypothesize this is due to poor theoretical understanding of the objectives in text-generation and their relation to the learning process (i.e., model training). To this end, we propose a new theoretical framework for learning to generate text in dialogue. Compared to existing theories of learning, our framework allows for analysis of the multi-faceted goals inherent to text-generation. We use our framework to develop theoretical guarantees for learners that adapt to unseen data. As an example, we apply our theory to study data-shift within a cooperative learning algorithm proposed for the \textit{GuessWhat?!} visual dialogue game. From this insight, we propose a new algorithm, and empirically, we demonstrate our proposal improves both task-success and human-likeness of the generated text. Finally, we show statistics from our theory are empirically predictive of multiple qualities of the generated dialogue, suggesting our theory is useful for model-selection when human evaluations are not available.
\end{abstract}

\section{Introduction}
\label{sec:intro}
\begin{figure*}[t]
    \centering
    \includegraphics[width=\textwidth]{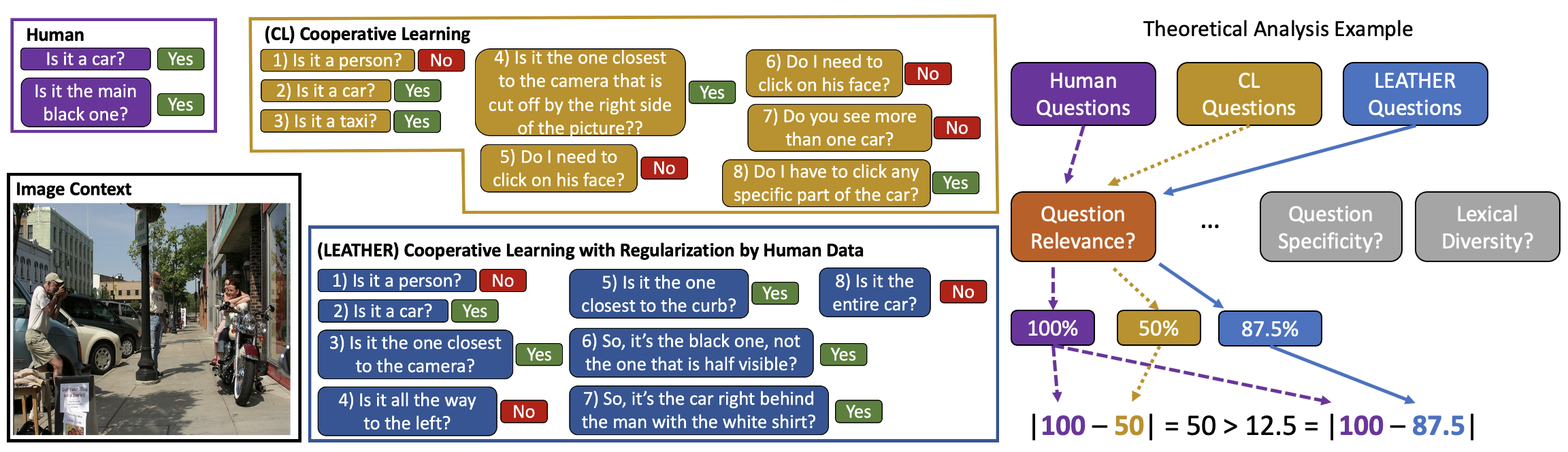}
    \caption{\small Examples of human and generated dialogue with original cooperative learning algorithm \texttt{CL} \citep{shekhar-etal-2019-beyond} and our learning algorithm motivated by our theory (\texttt{LEATHER}). Roughly, \texttt{LEATHER} works by applying a series of tests to generated dialogue and comparing the test results across the human and generated dialogue. Well-generated dialogue is expected to perform similarly to human dialogue on these tests. The example tests the \% of relevant questions. Compared to \texttt{CL}, \texttt{LEATHER} asks more relevant questions and therefore behaves more human-like. Aggregate empirical results in Section~\ref{sec:results} echo this trend.}
    \label{fig:example}
\end{figure*}
Generating coherent, human-like text for dialogue remains a challenge. Yet, it is an inseparable component of open domain and task oriented dialogue systems like Alexa and Siri. Undoubtedly, it is also a complex process to learn. Generation based on classification (e.g., next-word prediction) over-emphasizes the likelihood of text, leading to bland qualities, which are not human-like \citep{holtzman2019curious}. Meanwhile, framing dialogue generation as a Markov decision process is highly data-inefficient when compared to classification \citep{kakade2003sample}. Further, without careful design of rewards, models can suffer from mode-collapse in dialogue, producing repetitive behaviors that are not human-like  \citep{shekhar-etal-2019-beyond}. Even carefully designed rule-based systems are brittle in the presence of unforeseen data-shift.

Theoretical analyses of learning are imperative as they provide solutions to these issues. For example, traditional (PAC) learning theory \citep{valiant1984theory} studies similar issues arising from computational algorithms for learning to classify. Progress in our understanding has been impressive, ranging from comprehensive guarantees on data-efficiency \citep{ shalev2014understanding} to insights for algorithm-design when the learner is faced with data-shift \citep{zhao2019learning, zhang2019bridging, tachet2020domain}. While traditional theory may be applicable to simple generation objectives like next-word prediction, it is  
unfortunately unable to model more diverse goals. That is to say, it is insufficient to study replication of the diverse qualities inherent to human dialogue.

\textit{The goal of this paper is to provide a new theory for analyzing the multi-faceted objectives in computational learning of dialogue generation.} In particular, we propose \texttt{LEATHER}\footnote{\textbf{LEA}rning \textbf{T}heory for \textbf{H}uman-like dialogue gen\textbf{ER}ation} based on existing theories of computational learning. We demonstrate the utility of \texttt{LEATHER} with a focus on understanding data-shift in learning algorithms. We also show empirical results for a task-oriented visual dialogue game. In detail, we contribute as follows:
\begin{enumerate}[leftmargin=*,nolistsep]
\item In Section~\ref{sec:theory1}, we propose \texttt{LEATHER}, our novel theory for computational learning of dialogue generation. We use the \textit{GuessWhat?!} visual dialogue game \citep{de2017guesswhat} as an example to ground abstract terminology in practice. We conclude Section~\ref{sec:theory1} by applying our theory to analyze a cooperative learning algorithm for \textit{GuessWhat?!}. Our theory unveils harmful shifts in data-distribution that occur during training.
\item In Section~\ref{sec:theory-addendum}, we use \texttt{LEATHER} to study the general problem of data-shift in text-generation. We provide new theoretical study that  characterizes \textit{statistical energy} as an effective empirical tool for quantifying the impact of data-shift. Aptly, to conclude Section~\ref{sec:theory-addendum}, we use energy to motivate an improved learning algorithm for our running example -- the \textit{GuessWhat?!} game. 
\item In Section~\ref{sec:results}, empirically, we demonstrate the benefits of our \texttt{LEATHER}-inspired algorithm compared to common baselines. Importantly, we also show our proposed statistic (energy) is predictive of the quality of generated dialogue; i.e., we exhibit a linear relationship. This suggests \texttt{LEATHER} is useful, not only as a theoretical tool for algorithm design, but also as an empirical tool for model-selection.
\end{enumerate}

Our framework is publicly available through experimental code and a Python package.\footnote{ \texttt{\href{https://github.com/anthonysicilia/LEATHER-AACL2022}{github.com/anthonysicilia/LEATHER-AACL2022}}}

\section{Related Works}
\label{sec:related}
\paragraph{Theories of Learning to Generate Text} Most widely, text-generation is framed as a classification problem, in which a model predicts the next word provided existing context (e.g., previous words). While common PAC learning analyses do apply to classification, this theory only describes the learner's ability at the next-word prediction task. In some specific cases, instead, PAC analysis has also been used to analyze high-level objectives and motivate conversational strategies \citep{sicilia2022modeling}, but this analysis is problem-dependent. In contrast, our work offers a general problem-independent formalism for studying high-level qualities of generated text. Another frequent formalism comes from partially observable Markov decision processes (POMDPs) used to motivate reinforcement learning. For example, see \citet{strub2017end}. While POMDPs remedy the issues of typical PAC analysis by supporting implementation of high-level objectives, as we are aware, there are no empirically verified theoretical studies of learning under data-shift in POMDPs. In contrast, we demonstrate \texttt{LEATHER} admits such a theory of learning, using it to predict the human-likeness of generated text under data-shift (where POMDPs fall short).
\paragraph{Theories of Learning with Data-Shift}
Early learning theoretic models of data-shift in classification and regression are due to \citet{ben2010theory, ben2010impossibility} and \citet{mansour2009domain}. While modern approaches are generally similar in spirit, new statistics incorporate increasing information about the learning algorithm \citep{lipton2018detecting, kuroki2019unsupervised, germain2020pac, sicilia2022pac}. Ultimately, such techniques tend to improve the predictive capabilities of a theory in practical application \citep{rabanser2019failing, atwell2022change}.  
Diverse additional approaches to describing the impact of data-shift have also been proposed, for example, using integral probability metrics \citep{redko2017theoretical,redko2020ASO, shen2018wasserstein, johansson2019support}. Unfortunately, existing works focus on classification and regression which, as discussed, are not directly applicable to dialogue generation. Further, this theory does not easily extend to generation (see Section~\ref{sec:statistical_energy}). Ultimately, using \texttt{LEATHER}, our work derives a new statistic (energy) for predicting changes in model performance, which \textit{is} applicable to dialogue generation.

\paragraph{Evaluation of Generated Text} There are many automated metrics for evaluation of generated text including metrics based on $n$-grams such as BLEU \citep{papineni2002bleu}, ROUGE \citep{lin2004rouge}, and CIDEr \citep{vedantam2015cider}. Automated metrics based on neural models are also becoming more prevalent including BLEURT \citep{sellam-etal-2020-bleurt}, BertScore \citep{zhang2019bertscore}, and COSMic~\citep{inan2021cosmic}.
\citet{bruni2017adversarial} propose use of an adversary to discriminate between human and generated text, evaluating based on the generator's ability to fool the adversary. Human annotation and evaluation, of course, remains the gold-standard. Notably, our proposed framework encapsulates these techniques, since it is suitable for analyzing the impact of the learning process on \textit{all of these evaluation strategies and more} (see Section~\ref{sec:theory1} for examples). 

\section{Theory with Examples}
\label{sec:theory1}
In this section, we develop our new theoretical framework. To assist our exposition, we use the \textit{GuessWhat?!} visual dialogue game -- a variant of the child's game \textit{I Spy} -- as a running example. We first describe the game along with our modeling interests within the game. We continue with a description of our theory and then apply this theory to analyze an algorithm that learns to play the game.
\subsection{\textit{GuessWhat?!} Visual Dialogue Game}
\label{sec:theory1_guesswhat_example}
An image and \textbf{goal-object} within the image are both randomly chosen.
A \textbf{question-player} with access to the image asks yes/no questions to an \textbf{answer-player} who has access to both the image and goal-object. 
The question-player's goal is to identify the goal-object. The answer-player's goal is to reveal the goal-object to the question-player by answering the yes/no questions appropriately.
The question- and answer-player converse until the question-player is ready to make a guess or at most $m$ questions have been asked.\footnote{By default, $m=8$ following \citet{shekhar-etal-2019-beyond}.} The question-player then guesses which object was the secret goal.
\paragraph{Notation for Human Games}
To discuss this game within our theoretical framework next, we provide some notation. We assume the possible questions, answers, and objects are respectively confined to the sets $\mathcal{Q}$, $\mathcal{A}$, and $\mathcal{O}$. We also assume a set of possible images $\mathcal{I}$. A game between two human players can be represented by a series of random variables. The image-object pair is represented by the random tuple $(I, O)$. The dialogue between the question- and answer-player is represented by the random-tuple $D = (Q_1, A_1, \ldots, Q_P, A_P)$ with some random length $P \leq m$. Each  $Q_i$ is a random question taking value from the set $\mathcal{Q}$ and each $A_i$ is a random answer from the set $\mathcal{A}$. 
\paragraph{Notation for Modeled Games}
From a modeling perspective, in this paper, we focus on the question-player and assume a human answer-player. We consider learning a model that generates the random dialogue $\hat{D} = (\hat{Q}_1, \tilde{A}_1, \ldots \hat{Q}_m, \tilde{A}_m)$ along with a predicted goal object $\hat{O}$.\footnote{Notice, although the answer-player is still human, the answers may follow a distinct distribution due to dependence on the questions, so we demarcate this difference by $\tilde{\square}$.} For example, consider the model of \citet{shekhar-etal-2019-beyond} we study later. It generates dialogue/predicted goal as below:%
\begin{equation}\small
\label{eqn:gen_proc}
\begin{split}
   & \hat{O} = \mathtt{Gues}_\alpha(\mathtt{Enc}_\beta(I, \hat{D})) \\
   & \hat{Q}_{i+1} = \mathtt{QGen}_\theta(\mathtt{Enc}_\beta(I, \hat{Q}_1, \tilde{A}_1, \ldots \hat{Q}_i, \tilde{A}_i)
\end{split}
\end{equation}%
where, aptly, the neural-model $\mathtt{QGen}_\theta : \mathbb{R}^d \to \mathcal{Q}$ is called the \textit{question-generator} and the neural-model $\mathtt{Gues}_\alpha : \mathbb{R}^d \to \mathcal{O}$ is called the \textit{object-guesser}. The final neural-model $\mathtt{Enc}_\beta : \mathcal{I} \times (\mathcal{Q} \times \mathcal{A})^* \to \mathbb{R}^d$ is called the \textit{encoder} and captures pertinent features for the former models to share. Subscripts denote the parameters of each model (to be learned). 
\paragraph{Modeling Goals}
There are two main objectives we consider. The first is task-oriented:%
\begin{equation}\small
\label{eqn:tophase}
    \min\nolimits_{\alpha, \beta} \ \mathbf{E}[1\{\hat{O} \neq O\}]
\end{equation}%
which requires the predicted goal-object align with the true goal. The second objective is more elusive from a mathematical perspective: the generated dialogue $\hat{D}$ should be human-like. That is, it should be similar to the human dialogue $D$. As we see next, our theory is aimed at formalizing this objective.
\subsection{Theoretical Framework (\texttt{LEATHER})}
\label{sec:theory1_framework}
Now, we present our proposed theory with examples from the \textit{GuessWhat?!} game just discussed.
\subsubsection{Terminology}
\label{sec:theory1_terms}
\paragraph{Sets}
Assume a space $\mathcal{C}$, which encompasses the set of dialogue contexts, and a space $\mathcal{D}$, which encompasses the set of possible dialogues. In general, the structure of these sets and representation of elements therein are arbitrary to allow wide applicability to any dialogue system. For particular examples, consider the \textit{Guess What?!} game: $c \in \mathcal{C}$ is an image-goal pair and $d \in \mathcal{D}$ is a list of question-answer pairs. Note, we also allow an additional, arbitrary space $\mathcal{U}$ to account for any unobserved effects on the test outputs (discussed next).
\paragraph{Test Functions}
To evaluate generated text, we assume a group of fixed \textbf{test functions} $\{h_1\ldots h_L\}$ where for each $\ell \in [L]$ the function $h_\ell : \mathcal{D} \times \mathcal{U} \to [0,1]$ assigns a $[0,1]$-valued score that characterizes some high-level property of the dialogue. For example, a test function might be a binary value indicating presence of particular question-type, a continuous value indicating the proportion of clarification questions, a sentiment score, or some other user-evaluation. A test function can also be an automated metric like lexical diversity, for example.
\paragraph{Random Outputs} As noted, the space $\mathcal{U}$ primarily allows the test $h_\ell$ to exhibit randomness due to unobserved effects. For example, this is the case when our test function is a human evaluation and randomness arises from the human annotator. To model this, we assume an unknown distribution $\mathbb{U}$ over $\mathcal{U}$, so that for $U \sim \mathbb{U}$ and dialogue $d \in \mathcal{D}$, the score $h_\ell(d, U)$ is a random variable.  In general, we do not assume too much access to this randomness, since sampling from $\mathbb{U}$ can be costly; e.g., it can require recruiting new annotators or collecting new annotations. Note, $U$ can also be used to encapsulate additional (observable) information needed to conduct the test $h_\ell$ (e.g., a reference dialogue).
\paragraph{Goal Distribution}
Next, we assume a \textbf{goal distribution} $\mathbb{G}$ over the set of contextualized dialogues; i.e., context-dialogue pairs in $\mathcal{C} \times \mathcal{D}$. Typically, $\mathbb{G}$ is the distribution of contextualized dialogues between human interlocutors. In the \textit{GuessWhat?!} example, $\mathbb{G}$ is the distribution of the random, iterated tuple $((I, O), D)$. Recall, $I$ is the random image and $O$ is the random goal-object, which together form the context. $D = (Q_1, A_1\ldots Q_P, A_P)$ is the variable-length tuple of question-answer pairs produced by humans discussing the context $(I, O)$.
\paragraph{Dialogue Learner and Environment} We also assume some \textbf{dialogue learner} parameterized by $\theta \in \mathbb{R}^d$. The learner may only \textit{partially} control each  dialogue -- e.g., the learner might only control a subset of the turns in each dialogue -- and the mechanism through which this occurs is actually unimportant in the general setting; i.e., it will not be assumed in our theoretical results. Ultimately, we need only assume existence of some function $(\theta, c) \overset{\mathsf{E}}{\longrightarrow} \mathbb{P}_\theta(c)$ where $\theta$ are the learned parameters, $c \in \mathcal{C}$ is the context, and $\mathbb{P}_\theta(c)$ is a distribution over dialogues $\mathcal{D}$. In the \textit{GuessWhat?!} example discussed previously, the dialogue learner is $\mathtt{QGen}_\theta$ and the function $\mathsf{E}$ is implicitly defined by Eq.~\eqref{eqn:gen_proc}. In particular, we have $\hat{D} \sim \mathbb{P}_\theta(I, O)$ where image $I$ and object $O$ are sampled from the goal-distribution of contextualized dialogues $((I,O), D) \sim \mathbb{G}$. We call $\mathsf{E}$ the \textbf{environment} of the learner and use \textsf{sans serif} in notation. In the \textit{GuessWhat?!} example, the environment can change for a myriad of reasons: the answer-player could change strategies (inducing a new answer-distribution), the distribution of image $I$ could change, or the distribution of the object $O$ could change. All of which, can impact the function $(\theta, c) \overset{\mathsf{E}}{\longrightarrow} \mathbb{P}_\theta(c)$. One implicit factor we encounter later is the dependence of the environment $\mathsf{E}$ on the encoder parameters $\beta$ in Eq.~\eqref{eqn:gen_proc}. In discussion, we may explicitly write $\mathsf{E}_\beta$ to denote this dependence.%
\paragraph{Formal Objective of Learner}
As discussed before, the conceptual task of the dialogue learner is to produce human-like text. To rephrase more formally: the task of the learner is to induce a contextualized dialogue distribution that is indistinguishable from the the goal distribution. Unfortunately, this objective is made difficult by the complexity of dialogue. In particular, it is unclear what features of the dialogue are important to measure: should we focus on the atomic structure of a dialogue, the overall semantics, or maybe just the fluency? Surely, the answer to this question is dependent on the application. For this reason, we suggest the general notion of a \textit{test function}. Each test $\{h_1\ldots h_L\}$ can be hand selected prior to learning to emphasize a particular goal for the dialogue learner; e.g., as in Figure~\ref{fig:example}, $h_1$ can represent a user evaluation of question relevance, $h_2$ can capture lexical diversity, etc. Then, the quality of the contextualized dialogue distribution induced by the dialogue learner is measured by preservation of the output of the test functions. That is, the output of test functions should be similar when applied to human dialogue about the same context. We capture this idea through the \textbf{test divergence}:%
\begin{equation}\small
\label{eqn:TD}
\begin{split}
    & \mathbf{TD}_\mathsf{E}(\theta) = \sum\nolimits_{\ell=1}^L \mathbf{TD}_\mathsf{E}^\ell(\theta) \\
    & \text{where} \quad \mathbf{TD}_\mathsf{E}^\ell(\theta) = \mathbf{E}[\lvert h_\ell(D, U) - h_\ell(\hat{D}, U) \rvert], \\ 
    & \hspace{3.3em} (C, D) \sim \mathbb{G}, \ \hat{D} \sim \mathbb{P}_\theta(C), \  U \sim \mathbb{U}.
\end{split}
\end{equation}%
Notice, the test divergence is not only dependent on the parameters of the dialogue learner, but also the environment $\mathsf{E}$ which governs the distribution $\mathbb{P}_\theta(C)$. Recall, this function is induced by the learner's environment and its role in eliciting generated dialogue. Finally, with all terms defined, the formal objective of the dialogue learner is typically to minimize the test divergence:%
\begin{equation}\small
\label{eqn:TD_obj}
    \min\nolimits_\theta \ \mathbf{TD}_\mathsf{E}(\theta).
\end{equation}%
\paragraph{Example (BLEU/ROUGE)} Useful examples of test divergence are traditional evaluation metrics, using a human reference -- metrics like BLEU, ROUGE, or accuracy at next-word prediction. To see the connection, in Eq.~\eqref{eqn:TD}, let $L = 1$, let $h_1$ be one of the metrics, and set $U = D$. Then, $h_1(D, U)$ computes some form of $n$-gram overlap between the human reference and itself, so it evaluates to 1 (full overlap). On the other hand, $h_1(\hat{D}, U)$ is the traditional notion of the metric (e.g., BLEU or ROUGE). So, the test divergence simply becomes 1 minus the average of the metric. Notice, this example shows how $U$ can be used to encapsulate observable (random) information as well.
\paragraph{Example (\textit{GuessWhat?!})}
We can also consider a more complicated example in the \textit{GuessWhat?!} game. Here, \citet{shekhar-etal-2019-beyond} evaluate the human-likeness of dialogue with respect to the question strategies. Specifically, the authors consider a group of strategy classifiers $s_i : \mathcal{Q} \to \{0,1\}, i \in [L]$ which each indicate presence of a particular strategy in the input question. For example, $s_1$ might identify if its input is a color question \textit{``Is it blue?''} and $s_2$ might identify if its input is a spatial question \textit{``Is it in the corner?''}. Then, one intuitive mathematical description of the question-strategy dissimilarity may be written%
\begin{equation}\small
\label{eqn:llphase}
\mathbf{E}\Bigg[\sum_{i=1}^\ell \Big \lvert \frac{1}{P}\sum_{j=1}^P s_i(Q_j) - \frac{1}{m}\sum_{k=1}^m s_i(\hat{Q}_k) \Big \rvert \Bigg ]
\end{equation}%
Above captures expected deviation in proportion of color/spatial questions from the human- to the generated-text. 
It also coincides with the definition of test divergence. To see this, note the above is Eq.~\eqref{eqn:TD} precisely when $h_i$ returns the proportion of questions in a dialogue with type identified by $s_i$.%
\paragraph{Example (Human Annotation)} 
Human annotation is also an example, in which, human subjects are presented with two dialogue examples: one machine generated and one from a goal corpus with both dialogues pertaining to the same context. The human then annotates both examples with a score pertaining to the quality of the dialogue (e.g., the relevance of questions as in Figure~\ref{fig:example}). So, $h_i$ is represented by the annotation process, using $U$ to encapsulate any unobserved random effects. Then, the test divergence simply reports average absolute difference between annotations.
\subsection{Application to a \textit{GuessWhat?!} Algorithm}
In this next part, we apply the theory just discussed to analyze a cooperative learning algorithm ($\mathtt{CL}$) proposed by \citet{shekhar-etal-2019-beyond}. Recall Eq.~\eqref{eqn:gen_proc}, $\mathtt{CL}$ generates dialogue/predicted goal as below:%
\begin{equation}\small
\begin{split}
   & \hat{O} = \mathtt{Gues}_\alpha(\mathtt{Enc}_\beta(I, \hat{D})) \\
   & \hat{Q}_{i+1} = \mathtt{QGen}_\theta(\mathtt{Enc}_\beta(I, \hat{Q}_1, \tilde{A}_1, \ldots \hat{Q}_i, \tilde{A}_i)
\end{split}
\end{equation}%
where $\mathtt{QGen}_\theta$ is the question-generator, $\mathtt{Gues}_\alpha$ is the object-guesser, and $\mathtt{Enc}_\beta$ is the encoder.
\paragraph{$\mathtt{CL}$ Algorithm} Conceptually, cooperative learning encompasses a broad class of algorithms in which two or more independent model components coordinate during training to improve each other’s performance. For example, this can involve a shared learning objective \citep{das2017learning}. In the algorithm we consider, \citet{shekhar-etal-2019-beyond} coordinate training of a shared encoder using two distinct learning phases. Written in the context of our theory, they are:
\begin{enumerate}[leftmargin=*,nolistsep]
\item \textbf{Task-Oriented Learning}: Solve Eq.~\eqref{eqn:tophase}. Update $\alpha$ and $\beta$ to minimize $\mathbf{E}[1\{\hat{O} \neq O\}]$.
\item \textbf{Language Learning}: Solve Eq.~\eqref{eqn:TD_obj}. Update $\theta$ and $\beta$ to minimize $\mathbf{TD}_{\mathsf{E}_\beta}(\theta)$ where the test measures accuracy at next-word prediction.
\end{enumerate}
The two phases repeat, alternating until training is finished. As is typical when training neural-networks, the parameter weights are updated using batch SGD with a differentiable surrogate loss. To do so in the \textbf{task-oriented learning phase}, $\mathtt{Gues}_\alpha$ is designed to output probability estimates for each object and the negative log-liklihood of this output distribution is minimized. In the \textbf{language learning phase}, $\mathtt{QGen}_\theta$ is designed to output probabilities for the individual utterances that compose each question. Then, the surrogate optimization is:%
\begin{equation}\small
\label{eqn:suro}
\begin{split}
& \min\nolimits_{\theta, \beta} \mathbf{E}\Big [\sum_{i + 1 \leq P } \ \mathcal{L}(\hat{Q}_{i+1}, Q_{i+1}) \Big ] \quad \text{where} \\
& \hat{Q}_{i+1} = \mathtt{QGen}_\theta(\mathtt{Enc}_\beta(I, Q_1, A_1 \ldots Q_i, A_i)
\end{split}
\end{equation}
and $\mathcal{L}$ sums the negative logliklihood of the individual utterances. Notice, a form of \textit{teacher-forcing} is used in this objective, so that the encoder and question-generator are conditioned on \textit{only} human dialogue during the language learning phase. This fact will become important in the next part.
\paragraph{Problem} 
Importantly, the encoder parameters $\beta$ are updated in \textit{both} the \textit{task-oriented} and \textit{language learning} phases. So, in the language learning phase, the dialogue learner selects $\theta$ to minimize the test divergence in cooperation with a \textit{particular} choice of the encoder parameters -- let us call these $\beta^s$. Then, in the task-oriented learning phase, the learned encoder parameters may change to a new setting $\beta^t$. Importantly, by changing the parameters in Eq.~\eqref{eqn:gen_proc}, we induce a \textit{new} environment $\mathsf{E}_{\beta^t} \neq \mathsf{E}_{\beta^s}$, which governs a new generation process. For brevity, we set $\mathsf{T} = \mathsf{E}_{\beta^t}$ and $\mathsf{S} = \mathsf{E}_{\beta^s}$. This change brings us to our primary issue: the shift in learning environment \textit{does not necessarily preserve the quality of the generated dialogue}. In terms of our formal theory, we rephrase:%
\begin{equation}\small
    \mathbf{TD}_{\mathsf{S}}(\theta) \overset{?}{=} \mathbf{TD}_{\mathsf{T}}(\theta).
\end{equation}%
Without controlling the \textit{change} in test divergence across these two environments, it is possible the two learning phases are not ``cooperating'' at all.
\paragraph{\texttt{LEATHER}-Inspired Solution}
\label{sec:statistical_energy}
In general, it is clear equality will not hold, but we can still ask \textit{how different} these quantities will be. If they are very different, the quality of the dialogue generation learned in the language learning phase may degrade substantially during the task-oriented learning phase. More generally, the problem we see here is a problem of data-shift. In learning theory, the study of data-shift is often referred to as \textit{domain adaptation}. The test divergence on the environment $\mathsf{S}$ -- in which we learn $\theta$ -- is referred to as the \textbf{source error}, while the test divergence on the environment $\mathsf{T}$ -- in which we evaluate $\theta$ -- is referred to as the \textbf{target error}. The tool we use to quantify the change between the source error and the target error is an \textit{adaptation bound}, in which we find a statistic $\Delta$ for which the following is true:\footnote{The inequality is approximate because there are often other statistics in the bound, but through reasonable assumptions, one statistic $\Delta$ is identified as the key quantity of interest. These assumptions should be carefully made to avoid undesirable results \citep{ben2010impossibility, zhao2019learning}.}
\begin{equation}\label{eqn:da}
\small
\mathbf{TD}_{\mathsf{T}}(\theta) \lesssim \mathbf{TD}_{\mathsf{S}}(\theta) + \Delta.
\end{equation}
Then, we can be sure the error in the new environment has not increased much more than $\Delta$. In this sense, we say $\Delta$ is a \textbf{predictive statistic} because it predicts the magnitude of the target error $\mathbf{TD}_{\mathsf{T}}$ from the magnitude of the source error $\mathbf{TD}_{\mathsf{S}}$. To put it more concisely, it predicts the change in error from source to target. \textit{When $\Delta$ is small, the change should be small too or the target error should be even lower than the source error. When $\Delta$ is large, we cannot necessarily come to this conclusion.} Importantly, for $\Delta$ to be useful in practice it should not rely on too much information. In dialogue generation, it is important for $\Delta$ to avoid reliance on the \textit{test functions}, since these can often encompass costly sampling processes like human-evaluation.

As alluded in Section~\ref{sec:related}, many adaptation bounds exist, but as it turns out, none of them are directly applicable to dialogue generation contexts. This is because, as we are aware, computation of all previous bounds relies on efficient access to the test functions $\{h_1\ldots h_L\}$ and samples $U \sim \mathbb{U}$, which is not always possible in dialogue. In particular, these functions, along with the sampling process $U \sim \mathbb{U}$, might represent a time-consuming, real-world processes like human-evaluation. For this reason, in the next section, we prove a new adaptation bound with new statistic $\Delta$, which does not require access to the test functions.


\section{Text-Generation under Data-Shift}
\label{sec:theory-addendum}
Motivated by the \textit{GuessWhat?!} example and algorithm $\mathtt{CL}$, we continue in this section with a general study of domain adaptation for dialogue generation. We begin by proposing a new (general) adaptation bound for \texttt{LEATHER}. We then apply this general bound to the \textit{GuessWhat?!} algorithm $\mathtt{CL}$, motivating fruitful modifications through our analysis.
\subsection{A Novel Adaptation Bound for \texttt{LEATHER}}
\label{sec:addendum_novel_bound}
\paragraph{The Energy Statistic and Computation}
\begin{defn}
\label{defn:energy}
For any independent random variables $A$ and $B$, the discrete energy distance is defined $\varepsilon_{01}(A, B)$ equal to
\begin{equation}\small
\label{eqn:energy}
    2\mathbf{E}[1\{A\neq B\}] - \mathbf{E}[1\{A \neq A'\}] - \mathbf{E}[1\{B \neq B'\}] 
\end{equation}
where $A'$ is an i.i.d copy of $A$, $B'$ is an i.i.d. copy of $B$, and $1\{\cdot\}$ is the indicator function; i.e., it returns 1 for true arguments and 0 otherwise.
\end{defn}
The \textit{discrete energy distance} is a modification of the \textit{energy distance} sometimes called the \textit{statistical energy}. It was first proposed by \citet{szekely1989potential} and was studied extensively by \citet{szekely2013energy} in the case where $A$ and $B$ are continuous variables admitting a probability density function. In general, and especially in dialogue, this is not the case. Aptly, our newly suggested form of the energy distance is more widely applicable to any variables $A$ and $B$ for which equality is defined. While general, this distance can be insensitive, especially when $A$ and $B$ take on many values. To remedy this, we introduce the following.
\begin{defn}
\label{defn:coarse}
Let $\mathcal{D}$ be any set. A coarsening function is a map $c : \mathcal{D} \to \mathcal{D}$ such that $c(\mathcal{D}) = \{c(d) \mid d \in \mathcal{D}\}$ is finite, and further, $\lvert c(\mathcal{D}) \rvert < \lvert \mathcal{D} \rvert$.
\end{defn}
Since $\mathcal{D}$ is likely an immensely large set, this can make the signal $1\{a \neq b\}$ for $a,b \in \mathcal{D}$ overwhelming compared to the signal $1\{a = b\}$, and therefore, weaken the sensitivity of the discrete energy distance, overall. Coarsening functions allow us to alleviate this problem by effectively ``shrinking'' the set $\mathcal{D}$ to a smaller set. To do this, the role of the coarsening function is to exploit additional context to arrive at an appropriate \textit{clustering} of the dialogues, which assigns conceptually ``near'' dialogues to the same cluster. So, the choice of $c(d)$ should be a ``good'' representation of $d$, in the sense that too much valuable information is not lost. As a general shorthand, for a coarsening function $c$ and variables $A, B$, we write
\begin{equation}\small
    \varepsilon_c(A, B) = \varepsilon_{01}(c(A), c(B)).
\end{equation}
In this paper, we implement $c$ using the results of a $k$-means clustering with details in Appendix~\ref{sec:theory2}.
\paragraph{Adaptation Bound}
With these defined, we give the novel bound. Proof of a more general version of this bound -- applicable beyond dialogue contexts (e.g., classification) -- is provided in Appendix~\ref{sec:proofs} Thm.~\ref{thm:main}. 
Notably, our proof requires some technical results on the relationship between discrete energy and the characteristic functions of discrete probability distributions. These may also be of independent interest, outside the scope of this paper.
\begin{theorem}
\label{thm:main_cor}
For any $\theta \in \mathbb{R}^d$, any coarsening function $c : \mathcal{D} \to \mathcal{D}$, and all $\ell \in [L]$%
\begin{equation}\small
    \label{eqn:main-text-bound}
    \mathbf{TD}_\mathsf{T}^\ell(\theta) \leq \gamma + \varphi + \mathbf{TD}_\mathsf{S}^\ell(\theta) + \sqrt{\varepsilon_c(\tilde{D}_1, \tilde{D}_2) \times \delta}
\end{equation}%
where $\tilde{D}_1 \sim \mathbb{P}_\theta(C) = \mathsf{T}(\theta, C), \ \tilde{D}_2 \sim \mathbb{Q}_\theta(C) = \mathsf{S}(\theta, C), \ (C,D) \sim \mathbb{G}, \ U \sim \mathbb{U}$,\footnote{For simplicity, let $\tilde{D}_1, \tilde{D}_2, U$ be pairwise-independent.}%
\begin{equation}\small
    \begin{split}
        & \gamma =  \sum\nolimits_{i \in \{1,2\}} \mathbf{E}[\lvert h_\ell (c(\tilde{D}_i), U) - h_\ell(\tilde{D}_i, U)\rvert] \\ 
        & g \in \argmin_{f \in [0,1]^{\mathcal{D} \times \mathcal{U}}} \sum\nolimits_i \mathbf{E}[\lvert f( c(\tilde{D}_i), U) - h_\ell(D, U)\rvert] \\
        & \quad\text{where}\quad [0,1]^{\mathcal{D} \times \mathcal{U}} = \{f \mid f : \mathcal{D} \times \mathcal{U} \to [0,1]\}. \\
        & \varphi = \sum\nolimits_{i \in \{1,2\}} \mathbf{E}[\lvert g(c(\tilde{D}_i), U) - h_\ell(D, U)\rvert] \\ 
        & \delta = \mathbf{E} \Big [ \sum\nolimits_{x \in c(\mathcal{D})} \lvert g(x, U) - h_\ell(x, U) \rvert \Big ].
    \end{split}
\end{equation}%
\end{theorem}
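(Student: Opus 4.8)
The plan is to chain triangle inequalities that gradually rewrite $h_\ell(\tilde D_1, U)$ into $h_\ell(D, U)$, passing through the coarsened dialogue $c(\tilde D_1)$ and the auxiliary predictor $g$, and switching the index $1 \mapsto 2$ at the single place where the energy term appears. Unfolding $\mathbf{TD}_\mathsf{T}^\ell(\theta) = \mathbf{E}\,\lvert h_\ell(D,U) - h_\ell(\tilde D_1, U)\rvert$ and inserting $h_\ell(c(\tilde D_1),U)$ and $g(c(\tilde D_1),U)$ via the triangle inequality gives
\[
\mathbf{TD}_\mathsf{T}^\ell(\theta) \le \mathbf{E}\,\lvert h_\ell(\tilde D_1,U) - h_\ell(c(\tilde D_1),U)\rvert + \mathbf{E}\,\lvert h_\ell(c(\tilde D_1),U) - g(c(\tilde D_1),U)\rvert + \mathbf{E}\,\lvert g(c(\tilde D_1),U) - h_\ell(D,U)\rvert .
\]
The first summand is the $i=1$ term of $\gamma$ and the third is the $i=1$ term of $\varphi$, so only the middle term needs work.

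For the middle term I would use that $\tilde D_1 \perp U$ (the pairwise-independence footnote) to write, with $\mathcal{X} := c(\mathcal{D})$ finite, $p_x := \Pr[c(\tilde D_1)=x]$, $q_x := \Pr[c(\tilde D_2)=x]$ and $r(x) := \mathbf{E}_U\,\lvert h_\ell(x,U) - g(x,U)\rvert \in [0,1]$, that $\mathbf{E}\,\lvert h_\ell(c(\tilde D_i),U) - g(c(\tilde D_i),U)\rvert = \sum_{x \in \mathcal{X}} \Pr[c(\tilde D_i)=x]\, r(x)$ for $i \in \{1,2\}$. Then the difference of the $i=1$ and $i=2$ versions is $\sum_x (p_x - q_x) r(x)$, which by Cauchy--Schwarz is at most $\bigl(\sum_x (p_x - q_x)^2\bigr)^{1/2}\bigl(\sum_x r(x)^2\bigr)^{1/2} \le \bigl(\sum_x (p_x - q_x)^2\bigr)^{1/2}\sqrt{\delta}$, the last step using $r(x)^2 \le r(x)$ and $\sum_x r(x) = \delta$. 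So the middle term is at most $\mathbf{E}\,\lvert h_\ell(c(\tilde D_2),U) - g(c(\tilde D_2),U)\rvert + \bigl(\sum_x (p_x-q_x)^2\bigr)^{1/2}\sqrt{\delta}$.

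The remaining two tasks are then routine. First, identify $\sum_x (p_x - q_x)^2$ with $\varepsilon_c(\tilde D_1, \tilde D_2)$: expanding the three indicators in Definition~\ref{defn:energy} using independence and the finite (atomic) support of $c(\tilde D_1), c(\tilde D_2)$ gives $\varepsilon_{01}(c(\tilde D_1), c(\tilde D_2)) = 2\bigl(1 - \sum_x p_x q_x\bigr) - \bigl(1 - \sum_x p_x^2\bigr) - \bigl(1 - \sum_x q_x^2\bigr) = \sum_x (p_x - q_x)^2$. Second, bound the leftover $\mathbf{E}\,\lvert h_\ell(c(\tilde D_2),U) - g(c(\tilde D_2),U)\rvert$ by one more triangle inequality through $h_\ell(\tilde D_2, U)$ and $h_\ell(D, U)$, producing the $i=2$ term of $\gamma$, the quantity $\mathbf{E}\,\lvert h_\ell(\tilde D_2,U) - h_\ell(D,U)\rvert = \mathbf{TD}_\mathsf{S}^\ell(\theta)$, and the $i=2$ term of $\varphi$. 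Collecting all pieces yields exactly \eqref{eqn:main-text-bound}. Note the $\argmin$ property of $g$ is never used: any $g : \mathcal{D}\times\mathcal{U} \to [0,1]$ makes the inequality valid with $\varphi, \delta$ defined from it, and the theorem simply picks $g$ to minimize $\varphi$.

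I expect the main obstacle to be the bookkeeping around the discrete-energy identity and its hypotheses: checking that $\varepsilon_{01}$ of the coarsened variables collapses to the squared $\ell_2$ distance of their pmfs needs $c(\tilde D_1) \perp c(\tilde D_2)$ and finite support (this is where Definition~\ref{defn:coarse} and the independence footnote are load-bearing), and the factorization $\mathbf{E}\,\lvert h_\ell(c(\tilde D_i),U) - g(c(\tilde D_i),U)\rvert = \sum_x \Pr[c(\tilde D_i)=x]\,r(x)$ needs $\tilde D_i \perp U$. The appendix's more general Theorem~\ref{thm:main} presumably routes this identity through the characteristic functions of the discrete distributions -- by Parseval, $\sum_x (p_x - q_x)^2$ is a squared $L^2$ distance of characteristic functions -- which is the natural bridge to the classical continuous energy distance; for the dialogue statement here the direct combinatorial identity suffices.
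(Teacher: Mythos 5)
Your proposal is correct, and its skeleton coincides with the paper's proof of the general Theorem~\ref{thm:main}: a chain of triangle inequalities that isolates the difference $\mathbf{E}\lvert h_\ell(c(\tilde D_1),U)-g(c(\tilde D_1),U)\rvert-\mathbf{E}\lvert h_\ell(c(\tilde D_2),U)-g(c(\tilde D_2),U)\rvert$, followed by Cauchy--Schwarz against the pmf difference of the coarsened dialogues, with the leftover $i=2$ term absorbed into $\gamma$, $\varphi$, and $\mathbf{TD}_\mathsf{S}^\ell$. You also correctly identify that the $\argmin$ property of $g$ is never used beyond defining $\varphi$ and $\delta$, and that the independence assumptions in the footnote are exactly what make the Fubini factorization and the energy identity go through. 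The one genuine divergence is how the identity $\sum_x(p_x-q_x)^2=\varepsilon_{01}(c(\tilde D_1),c(\tilde D_2))$ is established: you compute it directly from Definition~\ref{defn:energy} using independence and finite support ($2(1-\sum_x p_xq_x)-(1-\sum_x p_x^2)-(1-\sum_x q_x^2)=\sum_x(p_x-q_x)^2$), whereas the paper routes it through three lemmas --- a Parseval--Plancherel theorem for the Fourier--Stieltjes transform of discrete measures (Lemma~\ref{lem:parseval}), a characteristic-function representation of the discrete energy (Lemma~\ref{lem:energy_as_char_fn}), and a bijection argument to pass from real-valued to arbitrary discrete variables (Lemma~\ref{lem:bijective_for_all}). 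For the theorem as stated your elementary computation is entirely sufficient and arguably preferable; the paper's heavier machinery buys the standalone characteristic-function results it advertises as being of independent interest and the explicit bridge to the continuous energy-distance literature, but is not needed for the bound. Two minor bookkeeping points in your favor: your choice to apply Cauchy--Schwarz at the level of $r(x)=\mathbf{E}_U\lvert h_\ell(x,U)-g(x,U)\rvert$ and then use $r(x)^2\le r(x)$ lands exactly on the main-text form of $\delta$ (the appendix version of $\delta$ is the squared variant), and the possible non-attainment of the $\argmin$ over all of $[0,1]^{\mathcal{D}\times\mathcal{U}}$ is a wrinkle shared with the paper and harmless since, as you note, the inequality holds for any fixed $g$.
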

\paragraph{Unobserved Terms in Dialogue} As noted, an important benefit of our theory is that we need not assume computationally efficient access to the test functions $\{h_1\ldots h_L\}$ or samples $U \sim \mathbb{U}$. Yet, the reader likely notices a number of terms in Eq.~\eqref{eqn:main-text-bound} dependent on both of these. Similar to the traditional case, we argue that our theory is still predictive because it is often appropriate to assume these unobserved terms are small, or otherwise irrelevant. We address each of them in the following: 
\begin{enumerate}[leftmargin=*, nolistsep]
    \item The term $\gamma$ captures average change in test output as a function of the coarsening function $c$. Whenever $c(\tilde{D}_i)$ is a good representative of $\tilde{D}_i$ (i.e., it maintains information to which $h_\ell$ is sensitive) $\gamma$ should be small. 
    \item The next term $\varphi$ is the smallest sum of expected differences that \textit{any} function of the coarsened dialogues $c(\tilde{D}_i)$ and the arbitrary randomness $U$ can achieve in mimicking the true test scores $h_\ell(D, U)$. Since the set of all functions from $\mathcal{D} \times \mathcal{U}$ to $[0,1]$ should be very expressive, this can be seen as another requirement on our coarsened dialogues $c(\tilde{D}_i)$. For example, when $c(\tilde{D}_i) = \tilde{D}_i \approx D$ this term can be close to zero. When instead $\lvert c(\mathcal{D}) \rvert$ is much smaller than $\lvert \mathcal{D} \rvert$ (e.g., a singleton set), we expect $\varphi$ to grow.
    \item The last term $\delta$ can actually be large.
    Fortunately, since $\delta$ is multiplied by the energy distance, this issue is mitigated when the statistical energy is small enough. Ultimately, the energy is paramount in controlling the impact of this term on the bound's overall magnitude.
\end{enumerate}
\paragraph{A Predictive Theory} Granted the background above, our discussion reduces the predictive aspect of the bound to a single key quantity: the discrete energy distance $\varepsilon_c(\tilde{D}_1, \tilde{D}_2)$. In particular, besides the test divergence $\mathbf{TD}_\mathsf{S}$, 
all other terms can be assumed reasonably small by proper choice of the coarsening function, or otherwise controlled by the statistical energy through multiplication. Note, the first issue is discussed in Appendix~\ref{sec:theory2}. Ultimately, the main takeaway is that statistical energy plays the role of $\Delta$ as discussed in Section~\ref{sec:statistical_energy}.
\subsection{A New Cooperative Learning Algorithm}
\label{sec:theory2_appl}
With all theoretical tools in play, we return to the algorithm $\mathtt{CL}$ and the problem raised in Section~\ref{sec:statistical_energy}.
\paragraph{\texttt{LEATHER}-Motivated Modification}
Recall, we are interested in quantifying and controlling the change in error from source $\mathbf{TD}_{\mathsf{S}}(\theta)$ to target $\mathbf{TD}_{\mathsf{T}}(\theta)$ across the training phases. Based on our theory, we know we should decrease the statistical energy between dialogues to reduce this change. That is, we should reduce the distance between the generated dialogue distributions across learning phases. We hypothesize this may be done by incorporating human dialogue in the task-oriented learning phase. The encoder in $\mathtt{CL}$ sees \textit{no} human dialogue when forming the prediction $\hat{O}$ that is compared to $O$ during task-oriented learning -- as seen in Eq.~\eqref{eqn:gen_proc}, only the generated dialogue $\hat{D}$ is used. In contrast, the encoder sees \textit{only} the human dialogue $D$ in the alternate language learning phase -- i.e., as seen in the surrogate objective in Eq.~\eqref{eqn:suro}. We hypothesize this stark contrast produces large shifts in the parameters $\beta^s \to \beta^t$ between phases. Instead, we propose to \textit{regularize} the task-oriented learning phase with human dialogue as below:%
\begin{equation}\small
\label{eqn:mod}
\begin{split}
    & \min_{\alpha, \beta} \mathbf{E}[1[\hat{O} \neq O]] + \mathbf{E}[1[\hat{O}' \neq O]] \ \quad \text{where} \\
    & \hat{O}' = \mathtt{Gues}_\alpha(\mathtt{Enc}_\beta(I, D)), \quad ((I,O), D) \sim \mathbb{G}
\end{split}
\end{equation}%
and $\hat{O}$ is still as described in Eq.~\eqref{eqn:gen_proc}. Intuitively, this should constrain parameter shift from $\beta^s \to \beta^t$, thereby constraining the change in outputs of the encoder, and ultimately constraining the change in outputs of the question-generator, which is conditioned on the encoder outputs. As the generated dialogue distributions from distinct learning phases will be more similar by this constraint, we hypothesize the penultimate effect will be decreased statistical energy (i.e., since energy measures distance of distributions). Based on our theory, reduced energy provides resolution to our problem: test divergence should be preserved from source to target. 


\section{Experiments}
\label{sec:results}
\begin{table*}[t]
    \centering\small
    \begin{tabular}{c||c|c|c|c||c|c||c}
         &  $\mathbf{acc}$ $\uparrow$
         &  $\mathbf{lex div}$ $\uparrow$ & $\mathbf{q div}$ $\uparrow$ & $\mathbf{rep q}$ $\downarrow$ & 
         $\mathbf{irr q (TD)}$ $\downarrow$ & $\mathbf{spc q (TD)}$ $\downarrow$ &
         $\mathbf{energy}$ $\downarrow$ \\ \hline 
         \texttt{CL} & 57.1 (55.9) & 9.98 (10.7) & 13.5 (14.3) & 55.9 (58.2) 
         & 30.5 & 23.3 & 0.143 \\
         \texttt{LEATHER} & 58.4 (56.9) & 11.4 (12.7) & 13.1 (16.0) & 53.6 (47.5) & 
         26.2 & 19.5 & 0.123 \\
         \texttt{RL} & 56.3 & 7.3 & 1.04 & 96.5 
         & - & - & -
    \end{tabular}
    \caption{ \small Comparison of \texttt{CL} and our theory-motivated modification \texttt{LEATHER}. Best epoch based on validation \textbf{acc} is reported with last epoch in parentheses. Up/down arrows indicate objective. Metrics are on 100 point scale, excluding \textbf{energy}. The first 4 metrics are automated, the next 2 are from human evaluation, and the last is our proposed statistic. \texttt{LEATHER} improves accuracy and human-likeness of dialogue. Further, our proposed statistic \textbf{energy} is predictive of human-likeness.}
    \label{tab:results}
\end{table*}
\begin{figure}
    \centering
    \includegraphics[width=\columnwidth]{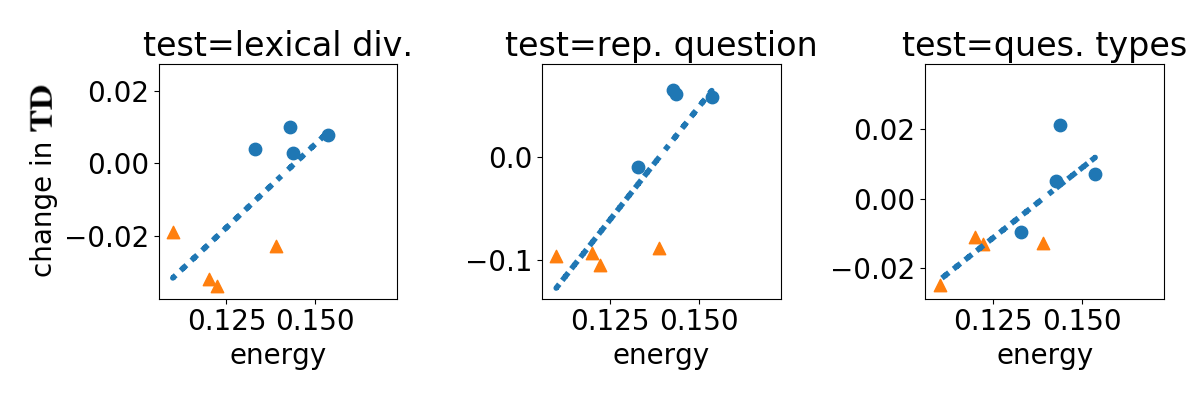}
    \caption{ \small Energy between training phases. Energy is predictive of change in test divergence as desired. Dotted line is line of best fit. Blue circles (\texttt{CL}) indicate use of \textit{only} generated dialogue in task-oriented learning phase. Orange triangles (\texttt{LEATHER}) indicate regularization with human data.}
    \label{fig:energy}
\end{figure}
\subsection{Cooperative Learning via \texttt{LEATHER}}
\label{sec:coop_learn_results}
\paragraph{Setup} 
In general, we use experimental settings of \citet{shekhar-etal-2019-beyond} (e.g., hyperparameters, validation, etc.) with full details available in the code. \texttt{CL} denotes the original algorithm proposed by \citet{shekhar-etal-2019-beyond} (Section~\ref{sec:statistical_energy}). \texttt{LEATHER} denotes our \texttt{LEATHER}-inspired modification (Section~\ref{sec:theory2_appl}).

\paragraph{Automated Metrics} We report average accuracy $\mathbf{acc}$ of the guesser module in identifying the true goal-object across three random seeds as well as average lexical diversity ($\mathbf{lex div}$; type/token ratio over all dialogues), average question diversity ($\mathbf{q div}$; \% unique questions over all dialogues), and average percent of dialogues with verbatim repeated questions ($\mathbf{rep q}$). $\mathbf{acc}$ quantifies task-success, while subsequent metrics are designed to quantify human-likeness of the generated dialogue. These metrics were all previously computed by \citet{shekhar-etal-2019-beyond} with details in their code.
\paragraph{Human Evaluation} We asked two annotators to help us further evaluate the results. Throughout the process, human subject guidelines from the authors' institution were followed and the task was approved by our institution human subject board. 
The annotators examined contextualized human dialogues and generated dialogues from a \texttt{CL} model and \texttt{LEATHER} model. All dialogues used the same image/goal context and annotators observed all dialogues for a specific context in random order without knowing how each dialogue was created. Across 50+ dialogues, average percentage of irrelevant questions per dialogue ($\mathbf{irr q}$) was determined.\footnote{An \textit{irrelevant} question ignores the image or current dialogue context. For example, in Figure~\ref{fig:example}, \texttt{CL} asks about the man's ``face'' (Q5) after learning the goal-object is a car, which ignores dialogue-context. \texttt{CL} also hallucinates an object ``cut off'' on the right side (Q4), which ignores image context.} Average percentage of specific questions ($\mathbf{spc q}$) was also determined.\footnote{A \textit{specific} question contains two or more modifiers of one or more nouns. For example, \texttt{LEATHER} modifies ``car'' with ``behind'' and ``man'' with ``the white shirt'' in Figure~\ref{fig:example} Q7.} We report $\mathbf{TD}$, which gives the average \textit{difference} in percentages from the corresponding human dialogue. Sans scaling, these $\mathbf{TD}$ metrics are examples of the test divergence in Eq.~\eqref{eqn:TD} using a human-evaluation test function. Qualitative analysis of errors was also conducted based on annotator remarks (provided later in this section). 
\paragraph{Impact of \texttt{LEATHER}} In Table~\ref{tab:results}, we compare the cooperative learning algorithms \texttt{CL} and \texttt{LEATHER}. The former uses only the generated dialogue during task-oriented learning, while the latter incorporates human data to regularize the change in parameters underlying the environmental shift. As predicted by our theory, regularization is very beneficial, improving task-success and human-likeness. For example, \texttt{LEATHER} decreases \% of irrelevant questions by 4.8\% compared to \texttt{CL}, which is more similar to human dialogue according to the test divergence ($\mathbf{TD}$). Interestingly, \texttt{LEATHER} also decreased \% of specific questions by 1.7\%. Based on the $\mathbf{TD}$, this is \textit{also} more similar to human dialogue, indicating humans ask fewer specific questions too. The design of the $\mathbf{TD}$ allows us to capture these non-intuitive results. Notably, regularization inspired by \texttt{LEATHER} \textit{allows us to train longer} without degrading task-success or suffering from mode collapse (i.e., repeated questions). Automated human-likeness metrics for the last epoch (in parentheses) show substantial improvements over \texttt{CL} in this case.
\paragraph{Cooperative vs. Reinforcement Learning}
In Table~\ref{tab:results}, we compare the two cooperative learning algorithms \texttt{CL} and \texttt{LEATHER} to the reinforcement learning algorithm (\texttt{RL}). We use the results reported by \citet{shekhar-etal-2019-beyond} for \texttt{RL}, since we share an experimental setup. Compared to \texttt{RL}, both cooperative learning approaches improve task success and human-likeness. As noted in Section~\ref{sec:related}, the theoretical framework for \texttt{RL} (i.e., POMDPs) is not equipped to study interaction of the distinct learning phases within this algorithm (i.e., with respect to data-shift). Better theoretical understanding could explain poor performance and offer improvement as demonstrated with \texttt{LEATHER}, which improves human-likeness of \texttt{CL}.  
\paragraph{Qualitative Analysis}
In dialogue generated by \texttt{CL}, questions with poor relevance ignored the image context (e.g., model hallucination). In dialogue generated by the \texttt{LEATHER} model, irrelevant questions ignored current dialogue context (e.g., a question which should already be inferred from existing answers). We hypothesize this may be due to poor faith in the automated answer-player used for training, which also has problems with model hallucination (e.g., Figure~\ref{fig:example}). Both models had issues with repeated questions. In human dialogue, issues were grammatical with few irrelevant questions. 
\subsection{\texttt{LEATHER} is Empirically Predictive}
Here, we show statistical energy predicts test divergence, empirically. Computation of energy can be automated, so predictive ability is useful for model-selection when human evaluation is not available. We consider test divergence ($\mathbf{TD}$) with 4 groups of tests: (\textbf{A}) the 9 fine-grained strategy classifiers of \citet{shekhar-etal-2019-beyond} used as in Eq.~\eqref{eqn:llphase}, (\textbf{B}) lexical diversity computed as type/token ratio per dialogue, (\textbf{C}) question repetition computed as a binary indicator for each dialogue, and (\textbf{D}) the discussed human-evaluations of question relevance/specificity. Figure~\ref{fig:energy} plots change in $\mathbf{TD}$ for (\textbf{A-C}) as a function of energy. Specifically, change in $\mathbf{TD}$ is the difference $\mathbf{TD}_{\mathsf{T}}(\theta) - \mathbf{TD}_{\mathsf{S}}(\theta)$ where ${\mathsf{S}}$ and ${\mathsf{T}}$ are defined by the transition from language learning to task-oriented learning discussed in Section~\ref{sec:theory1}. We plot this change at the transitions after epochs 65, 75, 85, and 95 (out of 100 total). Notably, \textit{energy is predictive and, specifically, is linearly related to change in test divergence.} For (\textbf{D}), in Table~\ref{tab:results}, we show average energy across all transitions compared to test divergence. Energy is also predictive for these human-evaluation tests.

\section{Conclusion}
\label{sec:conclusion}
This work presents \texttt{LEATHER}, a theoretically motivated
framework for learning to generate human-like dialogue.
The energy statistic, which is derived from this theory, is used to analyze \textit{and improve} an algorithm for task-oriented dialogue generation. Further, energy is empirically predictive of improvements in dialogue quality, measured by both automated and human evaluation. 
Future work may involve more experiments to test the utility of \texttt{LEATHER} in other dialogue settings. Theoretically, we hope to study sample-complexity in \texttt{LEATHER}, which is a hallmark of common PAC theories.

\section*{Acknowledgments} We thank the anonymous reviewers for helpful feedback and Jennifer C. Gates, CCC-SLP, for input on qualitative evaluations of dialogue in experiments.


\bibliography{anthology,custom}
\bibliographystyle{acl_natbib}
\clearpage
\onecolumn
\appendix
\section{Novel Adaptation Bound and Computation of Energy Statistic}
\label{sec:theory2}
In this section, we give our novel adaptation bound and details for the accompanying energy statistic. There is some redundancy between this section and Section~\ref{sec:theory-addendum}, but in general, this section is more detailed. Recall, \textit{source} error is denoted $\mathbf{TD}_\mathsf{S}$ and is observed on the environment $\mathbb{Q}_\theta(c) = \mathsf{S}(\theta, c)$. The \textit{target} error is denoted $\mathbf{TD}_\mathsf{T}$ and is observed on the environment $\mathbb{P}_\theta(c) = \mathsf{T}(\theta, c)$. For the algorithm \texttt{CL} discussed in the main text, the target is induced by the task-oriented learning phase and the source is induced by the language learning phase. 
\subsection{The Problem with Traditional Bounds}
\label{sec:problem}
\paragraph{Predictive Adaptation Theories}
An important quality of traditional domain adaptation bounds, proposed for classification and regression problems, is that they offer a \textit{predictive theory}. Namely, without observing the target error $\mathbf{TD}_\mathsf{T}$, we can infer this quantity from $\Delta$ and the source error $\mathbf{TD}_\mathsf{S}$. The utility of this is two-fold: first, it allows us to design algorithms that prepare a learner for data-shift by controlling $\Delta$; second, it allows a practitioner to select an appropriate model to deploy in the presence of data-shift by comparing the different values of $\Delta$ for each model. In general, these use-cases would not be possible without $\Delta$ because the target error $\mathbf{TD}_\mathsf{T}$ \textit{is not observable until it is too late.} In contrast, the quantity $\Delta$ \textit{should} be observable. While this is not always true of $\Delta$, authors typically reduce the main effect of $\Delta$ to one key statistic, which \textit{is} observable. For example, 
\citet{atwell2022change} 
reduce $\Delta$ to one key statistic called the 
$h$-discrepancy
by suggesting the other components making up $\Delta$ are small. This is why we use an ``approximate'' inequality in the main text, since other (small) terms may contribute to the bound.
\paragraph{Traditional Theories Are Not Predictive} Traditional theories of adaptation are \textit{not} predictive for dialogue generation. Namely, computation of $\Delta$ and its key components generally relies on computationally efficient access to the tests $\{h_1 \ldots h_L\}$ and requires sampling from the unknown distribution $U \sim \mathbb{U}$. While we can always \textit{observe} the outputs of $\{h_1\ldots h_L\}$ with randomness $U \sim \mathbb{U}$ through the source error $\mathbf{TD}_\mathsf{S}(\theta)$, it is \textit{not} always the case that we have computational efficiently access to these tests or the randomness. For example, as noted in Section~\ref{sec:theory1_terms}, the group of tests $\{h_1\ldots h_L\}$ along with samples $U$ from the unknown distribution $\mathbb{U}$ may represent complex real-world processes such as human-evaluation. Even for simpler evaluation metrics based on text-classifiers (e.g., like $\{s_1\ldots s_L\}$ in Eq.~\eqref{eqn:llphase}) algorithms for computing $\Delta$ turn out to be non-trivial, and must be handled on a case-by-case basis. Thus, in generation contexts, we typically have no way of computing $\Delta$ algorithmically, and when we do, it can be difficult to implement. If we require an easily implemented, predictive theory, then the classical theory is ruled out. As a solution, we propose a novel adaptation bound. 
\subsection{A Novel Adaptation Bound}
First, we define some terms.
\paragraph{The Energy Statistic and Computation}
\begin{defn}
\label{defn:energy}
For any independent random variables $A$ and $B$, the discrete energy distance is defined:
\begin{equation}\small
\label{eqn:energy}
    \varepsilon_{01}(A, B) = 2\mathbf{E}[1\{A\neq B\}] - \mathbf{E}[1\{A \neq A'\}] - \mathbf{E}[1\{B \neq B'\}] 
\end{equation}
where $A'$ is an i.i.d copy of $A$, $B'$ is an i.i.d. copy of $B$, and $1\{\cdot\}$ is the indicator function; i.e., it returns 1 for true arguments and 0 otherwise.
\end{defn}
The \textit{discrete energy distance} is a modification of the \textit{energy distance} sometimes called the \textit{statistical energy}. It was first proposed by \citet{szekely1989potential} and was studied extensively by \citet{szekely2013energy} in the case where $A$ and $B$ are continuous variables admitting a probability density function. In general, and especially in dialogue, this is not the case. Aptly, we suggest the above form of the energy distance, which is widely applicable to any variables $A$ and $B$ for which equality is defined. While general, this energy distance can be strict and insensitive, especially when $A$ and $B$ take on many possible values. To remedy this, we propose the following addendum.
\begin{defn}
\label{defn:coarse}
Let $\mathcal{D}$ be any set. A coarsening function is a map $c : \mathcal{D} \to \mathcal{D}$ such that $c(\mathcal{D}) = \{c(d) \mid d \in \mathcal{D}\}$ is finite, and further, $\lvert c(\mathcal{D}) \rvert < \lvert \mathcal{D} \rvert$.
\end{defn}
Since $\mathcal{D}$ is likely an immensely large set, this can make the signal $1\{a \neq b\}$ for $a,b \in \mathcal{D}$ overwhelming compared to the signal $1\{a = b\}$, and therefore, weaken the sensitivity of the discrete energy distance, overall. Coarsening functions allow us to alleviate this problem by effectively ``shrinking'' the set $\mathcal{D}$ to a smaller set. To do this, the role of the coarsening function is to exploit additional context to arrive at an appropriate \textit{clustering} of the dialogues, which assigns conceptually ``near'' dialogues to the same cluster. So, the choice of $c(d)$ should be a ``good'' representation of $d$, in the sense that too much valuable information is not lost. As a general shorthand, for a coarsening function $c$ and variables $A, B$, we write
\begin{equation}\small
    \varepsilon_c(A, B) = \varepsilon_{01}(c(A), c(B)).
\end{equation}
\paragraph{Example} One example of a coarsening function for dialogues is $k$-means clustering. In fact, this is the coarsening function we use to compute energy in Section~\ref{sec:results}, selecting $k = 100$. Real-valued vector representations of dialogues (e.g., from model latent space) can capture semantic information about the dialogue \citep{bowman2015generating}, so we use latent space representations (i.e., the output of the encoder) to represent each dialogue and conduct a $k$-means clustering on these representations. For a dialogue $d$ the output $c(d)$ is then defined by the cluster of $d$; i.e., we select an arbitrary dialogue to represent the whole of each cluster and assign this dialogue as the output $c(d)$. In practical implementations, it is typically easier to just compute the energy distance on the cluster labels themselves; this statistic is always equivalent to the energy on the coarsened dialogues, since the map between cluster representatives and cluster labels is bijective. Later, within Lemma~\ref{lem:bijective_for_all}, we prove this equivalence for any bijective map. 

Of course, regardless of implementation, this clustering is dependent on the choice of $k$. Figure~\ref{fig:energy-full} shows that the results in Section~\ref{sec:results} are robust to different choices of $k$. In all cases, there is a linear relationship between the energy and the change in the test divergence.
\begin{figure}
    \centering
    \includegraphics[width=.95\textwidth]{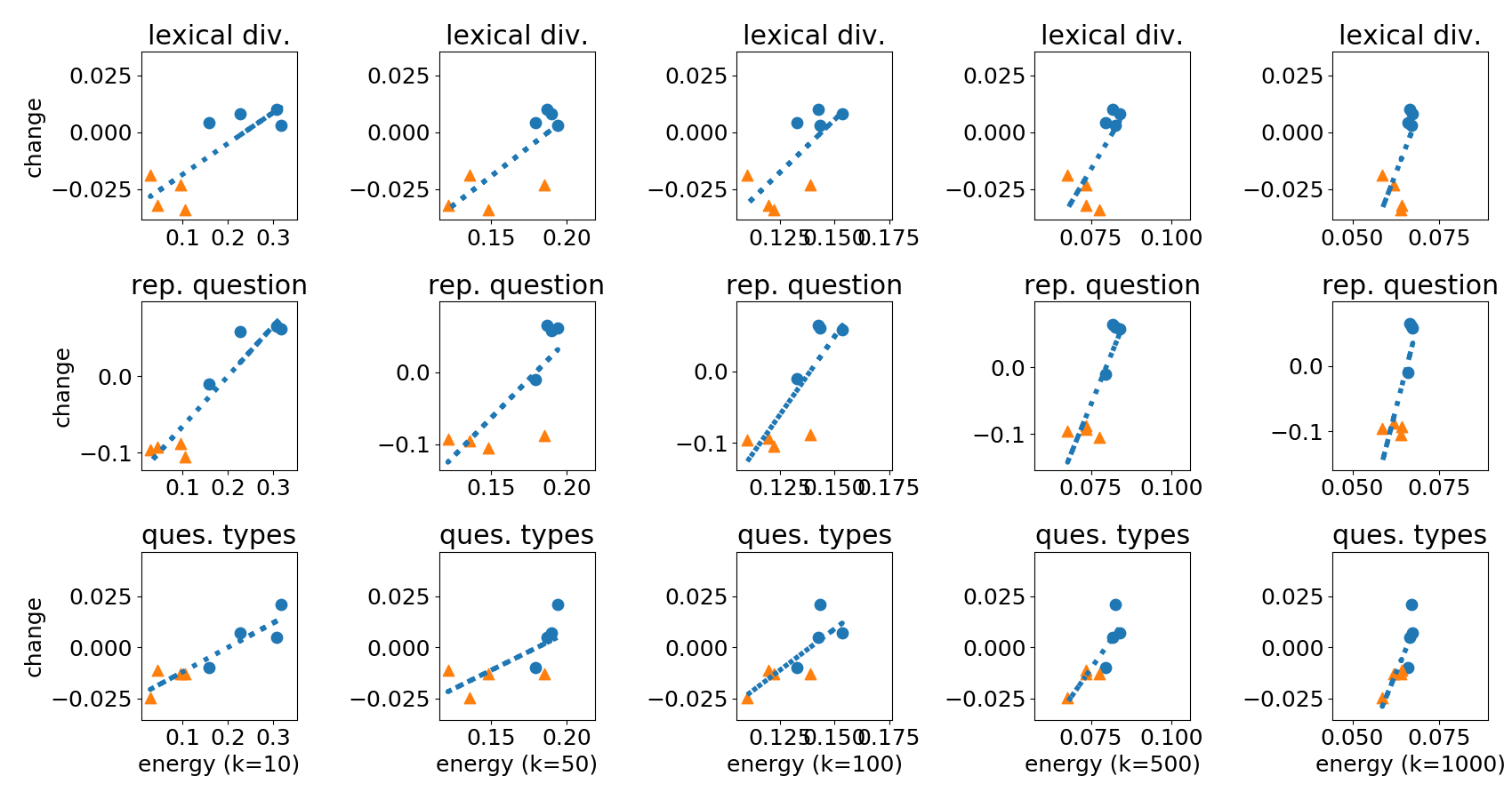}
    \caption{Comparison of energy statistics and automated test functions as in Section~\ref{sec:results}. Here, we vary the parameter $k$ in the $k$-means clustering used to determine the \textit{coarsening function} when computing energy. Trends reported in the main text are robust to variation in $k$.}
    \label{fig:energy-full}
\end{figure}
\paragraph{Adaptation Bound}
With these defined, we give the novel bound.
Proof of a more general version of this bound -- applicable beyond dialogue contexts -- is provided in Appendix~\ref{sec:proofs} Thm.~\ref{thm:main}. In particular, the general version is ``backwards compatible'' in the sense that it also applies to traditional learning theoretic settings like classification and regression. Arguably, in these settings, it also remains more computationally efficient than existing theories. Notably, our proof requires some technical results on the relationship between discrete energy and the characteristic functions of discrete probability distributions. These may also be of independent interest, outside the scope of this paper.
\begin{theorem}
\label{thm:main_cor}
For any $\theta \in \mathbb{R}^d$, any coarsening function $c : \mathcal{D} \to \mathcal{D}$, and all $\ell \in [L]$
\begin{equation}\small
    \label{eqn:bound}
    \mathbf{TD}_\mathsf{T}^\ell(\theta) \leq \gamma + \varphi + \mathbf{TD}_\mathsf{S}^\ell(\theta) + \sqrt{\varepsilon_c(\tilde{D}_1, \tilde{D}_2) \times \delta}
\end{equation}
where $\tilde{D}_1 \sim \mathbb{P}_\theta(C) = \mathsf{T}(\theta, C), \ \tilde{D}_2 \sim \mathbb{Q}_\theta(C) = \mathsf{S}(\theta, C), \ (C,D) \sim \mathbb{G}, \ U \sim \mathbb{U}$,\footnote{For simplicity, let $\tilde{D}_1, \tilde{D}_2, U$ be pairwise-independent. When independence does not hold, similar results can be derived under assumption of context-conditional independence.}
\begin{equation}\small
    \begin{split}
        & \gamma =  \mathbf{E}[\lvert h_\ell (c(\tilde{D}_1), U) - h_\ell(\tilde{D}_1, U)\rvert] + \mathbf{E}[\lvert h_\ell (c(\tilde{D}_2), U) - h_\ell(\tilde{D}_2, U)\rvert] \\ 
        & g \in \argmin_{f \in [0,1]^{\mathcal{D} \times \mathcal{U}}} \sum\nolimits_i \mathbf{E}[\lvert f( c(\tilde{D}_i), U) - h_\ell(D, U)\rvert] \quad\text{where}\quad [0,1]^{\mathcal{X} \times \mathcal{U}} = \{f \mid f : \mathcal{X} \times \mathcal{U} \to [0,1]\}. \\
        & \varphi = \mathbf{E}[\lvert g(c(\tilde{D}_1), U) - h_\ell(D, U)\rvert]  + \mathbf{E}[\lvert g(c(\tilde{D}_2), U) - h_\ell(D, U)\rvert]\\ 
        & \delta = \mathbf{E} \Big [ \sum\nolimits_{x \in c(\mathcal{D})} \lvert g(x, U) - h_\ell(x, U) \rvert \Big ].
    \end{split}
\end{equation}
\end{theorem}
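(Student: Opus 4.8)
The plan is to bound $\mathbf{TD}_\mathsf{T}^\ell(\theta)=\mathbf{E}[\lvert h_\ell(D,U)-h_\ell(\tilde D_1,U)\rvert]$ by a chain of triangle inequalities that routes $h_\ell(D,U)$ to $h_\ell(\tilde D_1,U)$ through a few carefully chosen intermediate quantities, peeling off each of $\gamma$, $\varphi$, and $\mathbf{TD}_\mathsf{S}^\ell(\theta)$ as ``residual'' terms and leaving exactly one cross-distribution term that the discrete energy distance controls. Concretely: insert $g(c(\tilde D_1),U)$ and then $h_\ell(c(\tilde D_1),U)$ between $h_\ell(D,U)$ and $h_\ell(\tilde D_1,U)$; the triangle inequality and $\mathbf{E}[\cdot]$ split off the $\tilde D_1$-summand of $\varphi$ and the $\tilde D_1$-summand of $\gamma$, leaving $\mathbf{E}[\lvert g(c(\tilde D_1),U)-h_\ell(c(\tilde D_1),U)\rvert]$. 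Symmetrically, insert $h_\ell(D,U)$ and $h_\ell(\tilde D_2,U)$ between $g(c(\tilde D_2),U)$ and $h_\ell(c(\tilde D_2),U)$; this splits off the $\tilde D_2$-summand of $\varphi$, the $\tilde D_2$-summand of $\gamma$, and $\mathbf{TD}_\mathsf{S}^\ell(\theta)=\mathbf{E}[\lvert h_\ell(D,U)-h_\ell(\tilde D_2,U)\rvert]$, leaving $\mathbf{E}[\lvert g(c(\tilde D_2),U)-h_\ell(c(\tilde D_2),U)\rvert]$. Everything then reduces to comparing $\mathbf{E}[\psi(c(\tilde D_1),U)]$ with $\mathbf{E}[\psi(c(\tilde D_2),U)]$, where $\psi(x,u):=\lvert g(x,u)-h_\ell(x,u)\rvert\in[0,1]$ (valid since $g$ and $h_\ell$ are $[0,1]$-valued). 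Note this chaining never uses that $g$ is a minimizer; it holds for any $g:\mathcal{D}\times\mathcal{U}\to[0,1]$, so I may invoke it for the minimizing $g$ in the statement to obtain the stated (tightest) $\varphi$ and $\delta$.

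The crux is an ``energy-transfer'' lemma: if $A_1,A_2$ are independent random variables valued in a finite set $\mathcal{X}$, each independent of $U$, and $\psi:\mathcal{X}\times\mathcal{U}\to[0,1]$, then $\mathbf{E}[\psi(A_1,U)]-\mathbf{E}[\psi(A_2,U)]\le\sqrt{\varepsilon_{01}(A_1,A_2)\cdot\mathbf{E}[\sum_{x\in\mathcal{X}}\psi(x,U)]}$. To prove it I would first establish the key algebraic identity that for discrete $A_1,A_2$ with pmfs $p_1,p_2$ on $\mathcal{X}$, $\varepsilon_{01}(A_1,A_2)=\sum_x (p_1(x)-p_2(x))^2$; this follows by expanding the three indicator expectations in Definition~\ref{defn:energy} via $\mathbf{E}[1\{A\neq B\}]=1-\sum_x p_A(x)p_B(x)$ for independent $A,B$ and simplifying. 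This is the discrete analogue, for the indicator ``distance'' $1\{\cdot\neq\cdot\}$, of the characteristic-function identity for the classical energy distance \citep{szekely2013energy}, and it is where the technical results on discrete energy and characteristic functions enter the argument: in this discrete setting a direct expansion suffices, whereas the general continuous version behind Thm.~\ref{thm:main} needs the Fourier machinery. Then, using $A_i\perp U$ to write $\mathbf{E}[\psi(A_i,U)]=\mathbf{E}_U[\sum_x p_i(x)\psi(x,U)]$, the difference equals $\mathbf{E}_U[\sum_x (p_1(x)-p_2(x))\psi(x,U)]$; bound it by Cauchy--Schwarz in $\ell^2(\mathcal{X})$ (legitimate since $\mathcal{X}$ is finite) to get $\mathbf{E}_U[\,\|p_1-p_2\|_2\,\|\psi(\cdot,U)\|_2\,]$, use $\psi\le 1\Rightarrow\psi^2\le\psi$ to replace $\|\psi(\cdot,U)\|_2$ by $\sqrt{\sum_x\psi(x,U)}$, and finally apply Jensen (concavity of $\sqrt{\cdot}$) to move $\mathbf{E}_U$ inside the root, landing exactly on $\sqrt{\varepsilon_{01}(A_1,A_2)\,\delta}$. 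Applying this with $A_i=c(\tilde D_i)$ — so that $\varepsilon_{01}(c(\tilde D_1),c(\tilde D_2))=\varepsilon_c(\tilde D_1,\tilde D_2)$ by definition, $\mathcal{X}=c(\mathcal{D})$ is finite by Definition~\ref{defn:coarse}, and $\delta$ is precisely $\mathbf{E}_U[\sum_{x\in c(\mathcal{D})}\psi(x,U)]$ — closes the chain and yields the bound verbatim.

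I expect the main obstacle to be making the energy-transfer step land \emph{exactly} on $\sqrt{\varepsilon_c\cdot\delta}$ rather than on a looser quantity: the sequence ``Cauchy--Schwarz, then $\psi^2\le\psi$, then Jensen'' must be applied in precisely that order and leans simultaneously on (i) the energy--$\ell_2$ identity, (ii) finiteness of $c(\mathcal{D})$, and (iii) the $[0,1]$-boundedness of the test functions. A secondary bookkeeping point is the role of independence: the triangle-inequality steps are pointwise and need nothing, but the transfer step requires $c(\tilde D_1)\perp c(\tilde D_2)$ and each independent of $U$, which is exactly the pairwise-independence stipulation in the footnote to the statement (and the same footnote indicates the context-conditional route when that fails). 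Finally, I would dispatch the existence of the minimizer $g$ trivially by noting the inequality holds for every $g:\mathcal{D}\times\mathcal{U}\to[0,1]$, hence in particular for any minimizer (or along a minimizing sequence), so no attainment argument is needed for the bound itself.
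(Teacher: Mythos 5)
Your proposal is correct, and its overall architecture coincides with the paper's: a chain of triangle inequalities that peels off the two summands of $\varphi$, the two summands of $\gamma$, and $\mathbf{TD}_\mathsf{S}^\ell(\theta)$, leaving the single cross-distribution term $\mathbf{E}[\lvert g(c(\tilde D_1),U)-h_\ell(c(\tilde D_1),U)\rvert]-\mathbf{E}[\lvert g(c(\tilde D_2),U)-h_\ell(c(\tilde D_2),U)\rvert]$, which is then rewritten as $\mathbf{E}_U[\sum_x (p_1(x)-p_2(x))\psi(x,U)]$ and controlled by Cauchy--Schwarz against the energy. Where you genuinely diverge is in the key lemma $\varepsilon_{01}(A_1,A_2)=\sum_x(p_1(x)-p_2(x))^2$: you obtain it by directly expanding $\mathbf{E}[1\{A\neq B\}]=1-\sum_x p_A(x)p_B(x)$ for independent discrete variables, a three-line computation, whereas the paper routes through a discrete Parseval--Plancherel theorem for characteristic functions (Lemma~\ref{lem:parseval}), an energy-as-Fourier-integral identity (Lemma~\ref{lem:energy_as_char_fn}), and a bijection argument to pass from $\mathbb{R}^d$-valued to arbitrary discrete variables (Lemma~\ref{lem:bijective_for_all}). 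Your elementary route suffices entirely for the theorem as stated and is considerably shorter; what the paper's detour buys is the standalone Fourier-analytic machinery it advertises as being of independent interest, not anything needed for the bound itself. A second point in your favor: you make explicit the final steps $\psi^2\le\psi$ (from $[0,1]$-boundedness) and Jensen's inequality needed to land on the corollary's $\delta=\mathbf{E}[\sum_x\lvert g(x,U)-h_\ell(x,U)\rvert]$, whereas the paper's proof of Theorem~\ref{thm:main} stops at $\sqrt{\varepsilon_{01}}\cdot\mathbf{E}[(\sum_x\lvert g-f\rvert^2)^{1/2}]$ with a $\delta$ that does not quite match the one in Theorem~\ref{thm:main_cor} and leaves the reconciliation implicit. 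Your handling of the independence requirements and of the (non-)attainment of the minimizer $g$ is also sound.
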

\paragraph{Unobserved Terms in Dialogue} As noted, an important benefit of our theory is that we need not assume computationally efficient access to the test functions $\{h_1\ldots h_L\}$ or samples $U \sim \mathbb{U}$. Yet, the reader likely notices a number of terms in Eq.~\eqref{eqn:bound} dependent on both of these. Similar to the traditional case, we argue that our theory is still predictive because it is typically appropriate to assume these unobserved terms are small, or otherwise irrelevant. We address each of them in the following: 
\begin{enumerate}[leftmargin=*, nolistsep]
    \item The term $\gamma$ captures average change in test output as a function of the coarsening function $c$. Whenever $c(\tilde{D}_i)$ is a good representative of $\tilde{D}_i$ (i.e., it maintains information to which $h_\ell$ is sensitive) $\gamma$ should be small. Since we choose the coarsening function, the former premise is not a strong requirement. In practice, if choice of $c$ is unclear, we recommend studying many choices as in Figure~\ref{fig:energy-full}.
    \item The next term $\varphi$ is the smallest sum of expected differences that \textit{any} function of the coarsened dialogues $c(\tilde{D}_i)$ and the arbitrary randomness $U$ can achieve in mimicking the true test scores $h_\ell(D, U)$. In general, the set of all functions from $\mathcal{D} \times \mathcal{U}$ to $[0,1]$ should be very expressive; e.g., it contains $h_\ell$ itself and any other function which might mimic $h_\ell(D, U)$ better when applied to $c(\tilde{D}_i)$ and $U$. So, it is not unreasonable to expect some good minimizer to exist, and therefore, $\varphi$ to be small. Using this logic, one additional constraint is that $c(\tilde{D}_i)$ has appropriate variance. For instance, if $c(\tilde{D}_i)$ is constant and $D$ is not, $\varphi$ can easily be large. Instead, when $c(\tilde{D}_i)$ does have variance, the expressiveness of the function class $[0,1]^{\mathcal{D} \times \mathcal{U}}$ can be well exploited. For reasonable dialogue learners and a well-chosen $c$, the variance of $c(\tilde{D}_i)$ is a non-issue.
    \item The last term $\delta$ may actually be large, but we argue this is also a non-issue for interpretation purposes. In general, because $\delta$ is an \textit{unnormalized} sum, its magnitude grows with the size of $c(\mathcal{D})$, even if the individual summands may be small. Fortunately, since $\delta$ is multiplied by the energy distance, this issue is mitigated when the statistical energy is small enough. Ultimately, the energy is paramount in controlling the impact of this term on the bound's overall magnitude.
\end{enumerate}
\paragraph{A Predictive Theory} Granted the background above, our discussion reduces the predictive aspect of the bound to a single key quantity: the discrete energy distance $\varepsilon_c(\tilde{D}_1, \tilde{D}_2)$. In particular, besides the test divergence $\mathbf{TD}_\mathsf{S}$ (known prior to the environmental change), all other terms can be assumed reasonably small, or otherwise controlled by the statistical energy through multiplication. Therefore, \textit{if the statistical energy between environments is small, it can be reasonable to assume the dialogue quality has been maintained or improved. Otherwise, it is possible the quality of the generated dialogue has substantially degraded.} In this way, the statistical energy is an easily observable quantity that assists us in determining if the source error $\mathbf{TD}_\mathsf{S}$ known before the environmental change is a good representative of the unknown target error $\mathbf{TD}_\mathsf{T}$, which is observed after the environmental change.
\paragraph{Use Cases}
In general, controlling the statistical energy between dialogues ensures we preserve dialogue quality when the evaluation metrics we care about are not available. As demonstrated in the main text, this makes it useful in algorithm design; i.e., to inform decisions in model training.
Energy can also be useful for model selection. Namely, the generation model whose dialogues have the smallest energy compared to goal dialogue should produce the highest quality dialogue. To see this, simply set $\tilde{D}_2 = D$ in the bound. Similar logical reduction shows the energy is the dominating term in this case as well.
\section{Proofs}
\label{sec:proofs}
In this section we prove the claimed theoretical results. So that the results may be more broadly applicable, we prove them in a more general context and then specify to the context of dialogue generation (in the main text and Appendix~\ref{sec:theory2}).

\subsection{An Adaptation Bound Based on a Discrete Energy Statistic}
In this section, we propose an adaptation bound based on the energy statistic. As we are aware, ours are the first theoretical results relating the statistical energy between distributions to the change in function outputs across said distributions. Given the use of the discrete energy distance (Def.~\ref{defn:energy}) and the accompanying coarsening function (Def.~\ref{defn:coarse}), we appropriately choose to prove our theoretical results for discrete random variables (i.e., those which take on only a countable number of values and exhibit a probability mass function). The effect of this choice is that we also contribute a number of new theoretical results relating the probability mass function of a real-valued, discrete random variable to its characteristic function (i.e., in similar style to the Parseval-Plancherel Theorem). Furthermore, we expand on the relationship between the statistical energy of distributions and their characteristic functions. While this has been well studied in the continuous setting \citep{szekely2013energy} where the distributions of random variables admit probability densities (i.e., absolutely continuous with respect to the Lesbesgue measure), it has not been studied in the case of discrete random variables. We start our results using only \textit{real-valued} discrete variables, but prove our main results for \textit{all} discrete random variables using Lemma~\ref{lem:bijective_for_all}
\subsubsection{Setup}
Suppose $A$ and $B$ are discrete random variables taking on values in $\mathbb{R}^d$ for some $d$. Respectively, the distribution of $A$ is $\alpha$ and the distribution of $B$ is $\beta$. The space $\Omega \subset \mathbb{R}^d$ is the countable subset of $\mathbb{R}^d$ for which $\alpha$ or $\beta$ assigns non-zero probability; i.e., $\Omega = \mathrm{supp}(\alpha) \cup \mathrm{supp}(\beta)$. Then, the expectation of any function $f : \mathbb{R}^d \to \mathbb{R}$ of $A$ is defined:
\begin{equation}
    \mathbf{E}[f(A)] = \int_{\mathbb{R}^d} f\mathrm{d}\alpha = \sum_{a \in \Omega} f(a)p_\alpha(a)
\end{equation}
where $p_\alpha$ is the probability mass function for $A$ (i.e., $\alpha$). Expectations of functions of $B$ are similarly defined.

The \textit{characteristic function} of $A$ is defined as the complex-conjugate of the Fourier-Stieltjes transform of the probability mass function $p_\alpha$. More explicitly, it is the function  $\hat{p}_\alpha : \mathbb{R}^d \to \mathbb{R}$ defined
\begin{equation}
\label{eqn:char_fn_defn}
    \hat{p}_\alpha(\tau) = \mathbf{E}[\mathrm{exp}\{i \tau^\mathrm{T}A\}] = \sum_{a \in \Omega} p_\alpha(a)\mathrm{exp}\{i \tau^\mathrm{T}a\}
\end{equation}
where $i$ is the imaginary unit (i.e., $i^2 = -1$) and $\tau^\mathrm{T}a$ is the (inner) product between column vectors $\tau$ and $a$. Note, the characteristic function always exists and is finite for each $\tau$.
\subsubsection{Parseval-Plancherel Theorem (Reprise)}
One notable use for the \textit{characteristic function} is the following \textit{inversion formula}. In the discrete context we consider, \citet{cuppens1975decomposition} proves the following
\begin{equation}
\label{eqn:cuppens_inversion}
    p_\alpha(a) = \lim_{\tau_1 \to \infty} \lim_{\tau_2 \to \infty} \ldots \lim_{\tau_d \to \infty} \Bigg ( \prod_{i=1}^d 1 / (2\tau_i) \Bigg ) \int_{B(\tau)}  \ \hat{p}_\alpha(t)\mathrm{exp}\{-i t^\mathrm{T}a\} \lambda(\mathrm{d}t)
\end{equation}
where $\tau = (\tau_1, \tau_2, \ldots, \tau_d)^\mathrm{T}$, $B(\tau) = \{x \in \mathbb{R}^d \mid -\tau_i \leq x_i \leq \tau_i\}$, and $\lambda$ is the Lebesgue measure. This inversion formula highlights the connection between the characteristic function and the general Fourier transform as alluded to just before Eq.~\eqref{eqn:char_fn_defn}, since Fourier transforms are well known for their own inversion formulas. Another commonly used result in Fourier Analysis (related to inversion) is the Parseval-Plancherel Theorem. We prove a variation on this result below. As we are aware, it is the first which uses the transform given in Eq.~\eqref{eqn:char_fn_defn} (i.e., specific to discrete, real-valued random variables). 
\begin{lemma}
\label{lem:parseval}
For any discrete random variables $A$ and $B$ as described, taking values in $\mathbb{R}^d$,
\begin{equation}
    \sum_{x \in \Omega} \lvert p_\alpha(x) - p_\beta(x)\rvert^2 = \lim_{\tau_1 \to \infty} \lim_{\tau_2 \to \infty} \ldots \lim_{\tau_d \to \infty} \Bigg ( \prod_{i=1}^d 1 / (2\tau_i) \Bigg ) \int_{B(\tau)} \ \lvert \hat{p}_\alpha(t) - \hat{p}_\beta(t) \rvert^2\lambda (\mathrm{d}t).
\end{equation}
\end{lemma}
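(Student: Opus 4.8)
The plan is to read this as a Plancherel-type identity for the signed, absolutely summable ``difference mass function'' $q = p_\alpha - p_\beta$ on $\Omega$, whose transform is $\hat q = \hat p_\alpha - \hat p_\beta$. Since $\sum_{x\in\Omega}\lvert q(x)\rvert \le \sum_{x}p_\alpha(x) + \sum_{x}p_\beta(x) = 2$, the function $q$ is absolutely summable and the double series $\sum_{a,b\in\Omega}\lvert q(a)\rvert\lvert q(b)\rvert = \big(\sum_{x}\lvert q(x)\rvert\big)^2$ converges. Because $q$ is real-valued, $\overline{\hat q(t)} = \sum_{a\in\Omega}q(a)\exp\{-i t^{\mathrm T}a\}$, so for every fixed $t$ I can expand the integrand pointwise as $\lvert\hat q(t)\rvert^2 = \hat q(t)\overline{\hat q(t)} = \sum_{a,b\in\Omega}q(a)q(b)\exp\{i t^{\mathrm T}(a-b)\}$, the double series converging absolutely (and uniformly in $t$, bounded by $(\sum_x\lvert q(x)\rvert)^2$).

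Next I would fix $\tau = (\tau_1,\dots,\tau_d)^{\mathrm T}$ and integrate over the box $B(\tau)$. Since $\lvert\hat q\rvert$ is bounded by $\sum_x\lvert q(x)\rvert$ on the compact box, $\int_{B(\tau)}\lvert\hat q\rvert^2\,\mathrm d\lambda$ is finite; and since $B(\tau)$ has finite Lebesgue measure with $\sum_{a,b}\lvert q(a)\rvert\lvert q(b)\rvert\,\lambda(B(\tau)) < \infty$, Fubini--Tonelli permits swapping $\int_{B(\tau)}$ with $\sum_{a,b}$, so that
\[
\Big(\prod_{i=1}^d \tfrac{1}{2\tau_i}\Big)\int_{B(\tau)}\lvert\hat q(t)\rvert^2\,\lambda(\mathrm dt) \;=\; \sum_{a,b\in\Omega} q(a)q(b)\prod_{i=1}^d \phi(\tau_i, a_i - b_i),
\]
where $\phi(\tau_i,s) := \tfrac{1}{2\tau_i}\int_{-\tau_i}^{\tau_i}\exp\{i t_i s\}\,\mathrm dt_i$ equals $\sin(\tau_i s)/(\tau_i s)$ for $s\neq 0$ and equals $1$ for $s = 0$, and $\lvert\phi(\tau_i,s)\rvert\le 1$ always.

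Then I take the iterated limits $\tau_d\to\infty$, then $\tau_{d-1}\to\infty$, $\dots$, then $\tau_1\to\infty$. At each of the $d$ stages the summands are dominated by $\lvert q(a)\rvert\lvert q(b)\rvert$, which is summable over $(a,b)$ and independent of every $\tau_i$, so the dominated convergence theorem for series lets me pass each limit through $\sum_{a,b}$. For a fixed pair $(a,b)$, in the innermost limit $\phi(\tau_d, a_d-b_d)\to 1\{a_d = b_d\}$ while the remaining factors stay bounded by $1$; peeling off one coordinate at a time yields $\prod_{i=1}^d\phi(\tau_i,a_i-b_i)\to \prod_{i=1}^d 1\{a_i = b_i\} = 1\{a = b\}$ under the iterated limit. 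Hence the right-hand side converges to $\sum_{a,b}q(a)q(b)\,1\{a = b\} = \sum_{a\in\Omega}q(a)^2 = \sum_{x\in\Omega}\lvert p_\alpha(x) - p_\beta(x)\rvert^2$, which is the claim. (Equivalently, one can substitute Cuppens' inversion formula for $q(a)$ -- Eq.~\eqref{eqn:cuppens_inversion} applied to $q$ -- into $\sum_{a}q(a)^2$ and move $\sum_{a}q(a)\exp\{-i t^{\mathrm T}a\} = \overline{\hat q(t)}$ inside the integral; the bookkeeping is identical.)

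I expect the main obstacle to be precisely this measure-theoretic bookkeeping: justifying the Fubini swap for each finite $\tau$ and, more delicately, applying dominated convergence $d$ times in sequence so that the nested limits genuinely commute with the double series -- all of which rests on the single uniform bound $\sum_{a,b}\lvert q(a)\rvert\lvert q(b)\rvert < \infty$ inherited from $q = p_\alpha - p_\beta$ being a difference of probability mass functions. The one-dimensional sinc evaluation of $\phi$ and the limit $\phi(\tau_i,s)\to 1\{s=0\}$ are routine once that structure is set up.
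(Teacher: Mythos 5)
Your proposal is correct and follows essentially the same route as the paper's proof: expand $\lvert\hat p_\alpha-\hat p_\beta\rvert^2$ as an absolutely convergent double sum, swap with the box integral by Fubini--Tonelli, evaluate coordinate-wise to the sinc kernel, and pass the iterated limits through by dominated convergence so the kernel collapses to $1\{a=b\}$. The only (cosmetic) difference is that you work directly with the signed difference $q=p_\alpha-p_\beta$ and its summability bound, whereas the paper first states the identity for a nonnegative summable $f$ and then substitutes the difference; your bookkeeping of the dominating bound $\sum_{a,b}\lvert q(a)\rvert\lvert q(b)\rvert\le 4$ is, if anything, slightly tidier.
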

\begin{proof}
For any function $f : \mathbb{R}^d \to \mathbb{R}^+$ such that $\sum_{x \in \Omega} f(x) < \infty$ for all $t \in \mathbb{R}^d$, we prove the following more general result
\begin{equation}
\label{eqn:more_general}
    \sum_{x \in \Omega} f^2(x) = \lim_{\tau_1 \to \infty} \lim_{\tau_2 \to \infty} \ldots \lim_{\tau_d \to \infty} \Bigg ( \prod_{i=1}^d 1 / (2\tau_i) \Bigg ) \int_{B(\tau)} \ \hat{f}(x)\hat{f}^*(x)\lambda (\mathrm{d}t)
\end{equation}
where as before a ``hat'' denotes the Fourier-Stieltjes transform given in Eq.~\eqref{eqn:char_fn_defn} and the new notation $\hat{f}^*$ denotes the complex-conjugate of $\hat{f}$. Observe, this proves the desired results because setting $f(x) =  p_\alpha(x) - q_\alpha(x) $ we have 
\begin{equation}
    f^2(x) = (p_\alpha(x) - q_\alpha(x))^2 = \lvert p_\alpha(x) - q_\alpha(x)\rvert^2
\end{equation}
and
\begin{equation}
\begin{split}
    \hat{f}(x)\hat{f}^*(x) & = \widehat{(p_\alpha(x) - p_\alpha(x))}\widehat{(p_\alpha(x) - p_\alpha(x))}^* \\
    & = (\hat{p}_\alpha(x) - \hat{p}_\alpha(x))(\hat{p}_\alpha(x) - \hat{p}_\alpha(x))^* = \lvert \hat{p}_\alpha(x) - \hat{p}_\alpha(x) \rvert^2.
\end{split}
\end{equation}
Proceeding with the proof of Eq.~\eqref{eqn:more_general} we have
\begin{equation}
\begin{split}
    & \lim_{\tau_1 \to \infty} \lim_{\tau_2 \to \infty} \ldots \lim_{\tau_d \to \infty} \Bigg ( \prod_{i=1}^d 1 / (2\tau_i) \Bigg ) \int_{B(\tau)} \ \hat{f}(x)\hat{f}^*(x)\lambda (\mathrm{d}t) \\
    & = \lim_{\tau_i \to \infty} \Bigg ( \prod_{i=1}^d 1 / (2\tau_i) \Bigg ) \int_{B(\tau)} \ \Bigg (\sum_{x \in \Omega} f(x)\mathrm{exp}\{i t^\mathrm{T}x\} \Bigg ) \Bigg ( \sum_{x \in \Omega} f(x)\mathrm{exp}\{-i t^\mathrm{T}x\} \Bigg ) \lambda (\mathrm{d}t) \\
    & = \lim_{\tau_i \to \infty} \Bigg ( \prod_{i} 1 / (2\tau_i) \Bigg ) \int_{B(\tau)} \ \sum_{x \in \Omega} \sum_{x' \in \Omega} f(x)f(x') \mathrm{exp} \{i(t^\mathrm{T}x - t^\mathrm{T}x')\}\lambda (\mathrm{d}t) \qquad \text{(Fubini-Tonelli)} \\
    & = \lim_{\tau_i \to \infty} \Bigg ( \prod_{i=1}^d 1 / (2\tau_i) \Bigg ) \ \sum_{x \in \Omega} \sum_{x' \in \Omega} f(x)f(x') \ \int_{B(\tau)} \mathrm{exp} \{i(t^\mathrm{T}x - t^\mathrm{T}x')\}\lambda (\mathrm{d}t) \qquad \text{(Fubini-Tonelli)} \\
    & = \lim_{\tau_i \to \infty} \sum_{x \in \Omega} \sum_{x' \in \Omega} f(x)f(x') \Bigg ( \prod_{i=1}^d 1 / (2\tau_i) \Bigg ) \Bigg [\ \int_{B(\tau)} \mathrm{exp} \{i(t^\mathrm{T}x - t^\mathrm{T}x')\}\lambda (\mathrm{d}t) \Bigg ] \\
    & = \lim_{\tau_i \to \infty} \sum_{x \in \Omega} \sum_{x' \in \Omega} f(x)f(x') \Bigg ( \prod_{i=1}^d \Bigg [ 1 / (2\tau_i) \ \int_{-\tau_i}^{\tau_i} \mathrm{exp} \{i(t_i(x_i - x'_i)\}\mathrm{d}t_i \Bigg ] \Bigg ) \quad \text{(Fubini-Tonelli)} \\
    & = \lim_{\tau_i \to \infty} \sum_{x \in \Omega} \sum_{x' \in \Omega} f(x)f(x') \Bigg ( \prod_{i=1}^d \chi (x_i, x'_i , \tau_i) \Bigg ) \quad\text{where}\quad \chi = \begin{cases}
    \frac{\sin \tau_i(x_i - x'_i)}{\tau_i (x_i - x'_i)} & \text{if} \ x_i \neq x'_i, \\
    1 & \text{else}
    \end{cases} \\
    & = \sum_{x \in \Omega} \sum_{x' \in \Omega} f(x)f(x') \Bigg ( \lim_{\tau_i \to \infty} \prod_{i=1}^d \chi (x_i, x'_i , \tau_i) \Bigg ) \qquad \text{(DCT)} \\
    & = \sum_{x \in \Omega} \sum_{x' \in \Omega} f(x)f(x') 1[x = x'] \quad\text{where}\quad 1[\mathrm{arg}] = \begin{cases}
    1 & \text{if} \ \mathrm{arg} \ \text{holds}, \\
    0 & \text{else}
    \end{cases} \\
    & = \sum_{x \in \Omega} f^2(x).
\end{split}
\end{equation}
In details: the first equality follows by definition; the second and third by Fubini-Tonelli Theorem;\footnote{The primary assumption of Fubini-Tonelli Theorem requires the \textit{absolute value} of the integrand have finite double or iterated integral/sum. In the first case, with the iterated sum, it is clear for each fixed $t$ since $\sum_x f(x)$ is bounded and so is $\exp\{-iz\}$ for all $z$. In the second and third cases, we simply cite the boundedness of $B(\tau)$ for each fixed $\tau$.} the fourth by simple rules of arithmetic; the fifth again by Fubini-Tonelli Theorem to decompose the volume calculation into a product; the sixth by evaluating the integral; seventh by the dominated convergence theorem;\footnote{The primary assumption of the DCT is that the sequence of functions being integrated (or summed in our case) is dominated by some function $g$ with finite integral (i.e., in the sense that the absolute value of every function in the sequence is less than or equal to $g$ on all inputs). Again, this is easy to see using properties assumed on $f$ and the fact that $|\chi| \leq 1$ for all inputs.} the eighth by evaluating the limit; and the last by simple arithmetic. 
\end{proof}
\subsubsection{The Energy of Discrete Distributions as Described by their Characteristic Functions}
\begin{lemma}
\label{lem:energy_as_char_fn}
For any independent, discrete random variables $A$ and $B$ as described, taking values in $\mathbb{R}^d$,
\begin{equation}
\label{eqn:discrete_energy_integral}
    \varepsilon_{\mathrm{01}}(A, B) = \lim_{\tau_1 \to \infty} \lim_{\tau_2 \to \infty} \ldots \lim_{\tau_d \to \infty} \Bigg ( \prod_{i=1}^d 1 / (2\tau_i) \Bigg ) \int_{B(\tau)} \ \lvert \hat{p}_\alpha(t) - \hat{p}_\beta(t) \rvert^2\lambda (\mathrm{d}t).
\end{equation}
\end{lemma}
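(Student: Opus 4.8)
The plan is to reduce the discrete energy distance to the squared $\ell^2$-distance between the two probability mass functions, at which point Lemma~\ref{lem:parseval} finishes the job. Concretely, I would first rewrite every indicator appearing in Definition~\ref{defn:energy} via $1\{X \neq Y\} = 1 - 1\{X = Y\}$, so that
\[
\varepsilon_{\mathrm{01}}(A,B) = (2 - 1 - 1) - 2\,\mathbf{E}[1\{A = B\}] + \mathbf{E}[1\{A = A'\}] + \mathbf{E}[1\{B = B'\}].
\]
The constant collapses to $0$, leaving only the three equality-probabilities.

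Next I would evaluate each equality-probability using independence. Since $A$ and $B$ are independent and discrete with masses supported on $\Omega = \mathrm{supp}(\alpha) \cup \mathrm{supp}(\beta)$, Tonelli's theorem (all summands nonnegative) gives $\mathbf{E}[1\{A = B\}] = \sum_{x \in \Omega} p_\alpha(x)\,p_\beta(x)$; because $A'$ is an i.i.d.\ copy of $A$ and $B'$ an i.i.d.\ copy of $B$, the same reasoning yields $\mathbf{E}[1\{A = A'\}] = \sum_{x \in \Omega} p_\alpha(x)^2$ and $\mathbf{E}[1\{B = B'\}] = \sum_{x \in \Omega} p_\beta(x)^2$. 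Each of these sums is bounded above by $1$, so they converge absolutely and $\varepsilon_{\mathrm{01}}(A,B)$ is well defined. Substituting and collecting terms under a single sum, the cross term doubles and the square terms carry the correct signs, so
\[
\varepsilon_{\mathrm{01}}(A,B) = \sum_{x \in \Omega}\big(p_\alpha(x)^2 - 2\,p_\alpha(x)\,p_\beta(x) + p_\beta(x)^2\big) = \sum_{x \in \Omega} \lvert p_\alpha(x) - p_\beta(x)\rvert^2.
\]
Finally I would invoke Lemma~\ref{lem:parseval} with precisely this left-hand side to rewrite the sum as the claimed limiting Fourier integral of $\lvert \hat{p}_\alpha(t) - \hat{p}_\beta(t)\rvert^2$, which is exactly Eq.~\eqref{eqn:discrete_energy_integral}.

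There is no serious obstacle here: the only points requiring care are (i) the interchange turning $\mathbf{E}[1\{A = B\}]$ into $\sum_{x} p_\alpha(x)\,p_\beta(x)$, which is a routine consequence of independence plus Tonelli, and (ii) checking that the index set $\Omega$ used in Lemma~\ref{lem:parseval} coincides with the support union used in the energy computation, which holds by construction. The genuine analytic content — the discrete Parseval--Plancherel identity and the evaluation of the oscillatory integrals — has already been dispatched in Lemma~\ref{lem:parseval}, so this lemma is essentially a bookkeeping corollary of that result together with the algebraic identity $\varepsilon_{\mathrm{01}}(A,B) = \lVert p_\alpha - p_\beta \rVert_{\ell^2}^2$.
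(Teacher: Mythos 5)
Your proposal is correct, but it takes a genuinely different route from the paper. The paper proves the lemma ``from the Fourier side'': it starts with Sz\'ekely's cosine identity $\lvert \hat{p}_\alpha(t) - \hat{p}_\beta(t)\rvert^2 = \mathbf{E}\{2[1-\cos t^\mathrm{T}(A-B)] - [1-\cos t^\mathrm{T}(A-A')] - [1-\cos t^\mathrm{T}(B-B')]\}$, interchanges expectation and integral, explicitly evaluates the resulting oscillatory integrals into products of $\mathrm{sinc}$ factors, and passes to the limit by dominated convergence to recover the indicators in Definition~\ref{defn:energy} --- essentially repeating the analytic computation already carried out inside the proof of Lemma~\ref{lem:parseval}. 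You instead work ``from the probability side'': the elementary identity $1\{X\neq Y\}=1-1\{X=Y\}$ together with independence gives $\varepsilon_{01}(A,B) = \sum_{x\in\Omega}\lvert p_\alpha(x)-p_\beta(x)\rvert^2$ by pure algebra, after which Lemma~\ref{lem:parseval} is invoked as a black box to convert the $\ell^2$ sum into the limiting Fourier integral. Your two interchanges (Tonelli for $\mathbf{E}[1\{A=B\}]=\sum_x p_\alpha(x)p_\beta(x)$, and the agreement of $\Omega$ with the support union) are indeed the only points needing care, and both are routine. Your route is shorter, avoids duplicating the $\mathrm{sinc}$-limit computation, and the intermediate identity $\varepsilon_{01}(A,B)=\lVert p_\alpha-p_\beta\rVert_{\ell^2}^2$ is of independent value: it holds for arbitrary discrete variables (not just $\mathbb{R}^d$-valued ones) and therefore also delivers Lemma~\ref{lem:bijective_for_all} essentially for free. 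What the paper's direct approach buys is that it never needs to pass through the pmf identity at all, keeping the argument entirely in terms of characteristic functions, which is closer in spirit to Sz\'ekely's original continuous-case treatment.
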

\begin{proof}
According to \citet{szekely2013energy}, for independent $A$ and $B$, we have
\begin{equation}
\begin{split}
    &  \lvert \hat{p}_\alpha(t) - \hat{p}_\beta(t) \rvert^2 = \mathbf{E}[\cos \{ t^\mathrm{T}(A - A') \} + \cos \{ t^\mathrm{T}(B - B') \} - \cos \{ t^\mathrm{T}(A - B) \} ] \\
    & = \mathbf{E}\{ 2[1 - \cos \{ t^\mathrm{T}(A - B) \} ] - [1 - \cos \{ t^\mathrm{T}(A - A') \} ] - [1 - \cos \{ t^\mathrm{T}(B - B') \} ]\}
\end{split}
\end{equation}
where $A'$ and $B'$ are i.i.d. copies of $A$ and $B$, respectively. With the equivalence above, by Fubini's Theorem, we may interchange the expectation and integral in Eq.~\eqref{eqn:discrete_energy_integral}. We may also change the order of integration to arrive at
\begin{equation}
\label{eqn:eqn:discrete_energy_integral_szekely}
\begin{split}
    & \lim_{\tau_1 \to \infty} \lim_{\tau_2 \to \infty} \ldots \lim_{\tau_d \to \infty} \Bigg ( \prod_{i=1}^d 1 / (2\tau_i) \Bigg ) \int_{B(\tau)} \ \lvert \hat{p}_\alpha(t) - \hat{p}_\beta(t) \rvert^2\lambda (\mathrm{d}t)  \\
    & = \lim_{\tau_i \to \infty} \mathbf{E} \Bigg [ \Bigg ( \prod_{i=1}^d \frac{1}{(2\tau_i)} \Bigg ) \int_{-\tau_1}^{\tau_1} \ldots \int_{-\tau_d}^{\tau_d} \Big \{ 2 \Big ( 1 - \cos \sum_{i=1}^d \tau_i(A_i - B_i) \Big ) \\
    & \hspace{7em} - \Big ( 1 - \cos \sum_{i=1}^d \tau_i(A_i - A_i') \Big ) - \Big ( 1 - \cos \sum_{i=1}^d \tau_i(B_i - B_i') \Big ) \Big \} \mathrm{d}\tau_d\ldots \mathrm{d}\tau_1 \Bigg].
\end{split}
\end{equation}
To evaluate the integral we first observe, for any $x \in \mathbb{R}^d$,
\begin{equation}
\begin{split}
    \int_{-\tau_d}^{\tau_d}  1 - \cos \sum_{i=1}^d \tau_ix_i \mathrm{d}\tau_d & = 2\tau_d -  \frac{\sin \Big ( \tau_dx_d + \sum_{i=1}^{d-1} \tau_i x_i\Big ) - \sin \Big ( -\tau_dx_d + \sum_{i=1}^{d-1} \tau_i x_i\Big )}{x_d} \\
    & = 2\tau_d - \frac{2\cos\Big (\sum_{i=1}^{d-1} \tau_i x_i\Big )\sin(\tau_dx_d )}{x_d}.
\end{split}
\end{equation}
Notice, the above equation implies an iterative pattern which can be used to solve the multiple integral. Keeping in mind which terms are constants with respect to the differential, we have
\begin{equation}
\label{eqn:multiple_integral}
\begin{split}
    & \int_{-\tau_1}^{\tau_1} \ldots \int_{-\tau_{d-1}}^{\tau_{d-1}} \Big ( \int_{-\tau_d}^{\tau_d}  1 - \cos \sum_{i=1}^d \tau_ix_i \mathrm{d}\tau_d \Big ) \mathrm{d}\tau_{d-1}\ldots\mathrm{d}\tau_1 \\
    & = \int_{-\tau_1}^{\tau_1} \ldots \int_{-\tau_{d-2}}^{\tau_{d-2}} \Bigg ( \int_{-\tau_{d-1}}^{\tau_{d-1}} 2\tau_d - \frac{2\cos\Big (\sum_{i=1}^{d-1} \tau_i x_i\Big )\sin(\tau_dx_d )}{x_d} \mathrm{d}\tau_{d-1} \Bigg )\mathrm{d}\tau_{d-2}\ldots\mathrm{d}\tau_1 \\
    & = \int_{-\tau_1}^{\tau_1} \ldots \int_{-\tau_{d-2}}^{\tau_{d-2}} \Bigg ( (2\tau_d)(2{\tau_{d-1}}) - \frac{4\cos\Big (\sum_{i=1}^{d-2} \tau_i x_i\Big )\sin(\tau_d x_d )\sin(\tau_{d-1}x_{d-1} )}{x_d x_{d-1}} \Bigg )\mathrm{d}\tau_{d-2}\ldots\mathrm{d}\tau_1 \\
    & = \ldots \\
    & = \int_{-\tau_1}^{\tau_1} \ldots \int_{-\tau_{d-j}}^{\tau_{d-j}} \Bigg (\prod_{i=1}^j(2\tau_{d - i + 1}) - \frac{\cos\Big (\sum_{i=1}^{d-j} \tau_i x_i\Big )\prod_{i=1}^j2\sin(\tau_{d - i + 1} x_{d - i + 1} )}{\prod_{i=1}^jx_{d - i + 1}} \Bigg )\mathrm{d}\tau_{d-j}\ldots\mathrm{d}\tau_1 \\
    & \ldots \\
    & = \prod_{i=1}^{d}(2\tau_{d - i + 1}) - \frac{\prod_{i=1}^{d}2\sin(\tau_{d - i + 1} x_{d - i + 1} )}{\prod_{i=1}^{d}x_{d - i + 1} } \\
    & = \prod_{i=1}^{d}(2\tau_{i}) - \frac{\prod_{i=1}^{d}2\sin(\tau_{i} x_{i} )}{\prod_{i=1}^{d}x_{i} } .
\end{split}
\end{equation}
Now, returning to the RHS of Eq.~\eqref{eqn:eqn:discrete_energy_integral_szekely}, linearity of the integral implies
\begin{equation}
\label{eqn:linearity_of_integral}
\begin{split}
& \Bigg ( \prod_{i=1}^d \frac{1}{(2\tau_i)} \Bigg ) \int_{-\tau_1}^{\tau_1} \ldots \int_{-\tau_d}^{\tau_d} \Big \{ 2 \Big ( 1 - \cos \sum_{i=1}^d \tau_i(A_i - B_i) \Big ) \\
& \hspace{7em} - \Big ( 1 - \cos \sum_{i=1}^d \tau_i(A_i - A_i') \Big ) - \Big ( 1 - \cos \sum_{i=1}^d \tau_i(B_i - B_i') \Big ) \Big \} \mathrm{d}\tau_d\ldots \mathrm{d}\tau_1 \\
& = \Bigg ( \prod_{i=1}^d \frac{1}{(2\tau_i)} \Bigg ) \int_{-\tau_1}^{\tau_1} \ldots \int_{-\tau_d}^{\tau_d} \Big \{ 2 \Big ( 1 - \cos \sum_{i=1}^d \tau_i(A_i - B_i) \Big ) \} \mathrm{d}\tau_d\ldots \mathrm{d}\tau_1 \\
& \hspace{3em} - \Bigg ( \prod_{i=1}^d \frac{1}{(2\tau_i)} \Bigg ) \int_{-\tau_1}^{\tau_1} \ldots \int_{-\tau_d}^{\tau_d} \Big \{ \Big ( 1 - \cos \sum_{i=1}^d \tau_i(A_i - A_i') \Big ) \} \mathrm{d}\tau_d\ldots \mathrm{d}\tau_1 \\
& \hspace{3em} - \Bigg ( \prod_{i=1}^d \frac{1}{(2\tau_i)} \Bigg ) \int_{-\tau_1}^{\tau_1} \ldots \int_{-\tau_d}^{\tau_d} \Big \{ \Big ( 1 - \cos \sum_{i=1}^d \tau_i(B_i - B_i') \Big ) \Big \} \mathrm{d}\tau_d\ldots \mathrm{d}\tau_1.
\end{split}
\end{equation}
Thus, we can apply the solution in Eq.~\eqref{eqn:multiple_integral} to solve the integral in Eq.~\eqref{eqn:eqn:discrete_energy_integral_szekely}. Taking $x_i = (A_i - B_i)$ in Eq.~\eqref{eqn:multiple_integral}, we consider the first integral of Eq.~\eqref{eqn:linearity_of_integral} above along with its multiplicative constant: 
\begin{equation}
\begin{split}
    & \Bigg ( \prod_{i=1}^d \frac{1}{(2\tau_i)} \Bigg ) \int_{-\tau_1}^{\tau_1} \ldots \int_{-\tau_d}^{\tau_d} ( 1 - \cos \sum_{i=1}^d \tau_i(A_i - B_i) \Big ) \\
    & = \Bigg ( \prod_{i=1}^d \frac{1}{(2\tau_i)} \Bigg ) \Bigg (\prod_{i=1}^{d}(2\tau_{i}) - \frac{\prod_{i=1}^{d}2\sin \Big \{ \tau_{i}(A_{i} - B_i) \Big \}}{\prod_{i=1}^{d} (A_{i} - B_i) } \Bigg )  \\
    & = 1 - \prod_{i=1}^{d} \frac{\sin \Big \{ \tau_{i}(A_{i} - B_i) \Big \}}{\tau_i (A_{i} - B_i) } = 1 - \prod_{i=1}^d \chi(A_i, B_i, \tau_i)
\end{split}
\end{equation}
where $\chi$ is defined in the proof of Eq.~\eqref{eqn:more_general} (Lemma~\ref{lem:parseval}). Taking $x_i = (A_i - A_i')$ and $x_i = (B_i - B_i')$ and proceeding as above allows us to resolve the entire integral. In particular, we have
\begin{equation}
\begin{split}
    & \lim_{\tau_1 \to \infty} \lim_{\tau_2 \to \infty} \ldots \lim_{\tau_d \to \infty} \Bigg ( \prod_{i=1}^d 1 / (2\tau_i) \Bigg ) \int_{B(\tau)} \ \lvert \hat{p}_\alpha(t) - \hat{p}_\beta(t) \rvert^2\lambda (\mathrm{d}t)  \\
    & = \lim_{\tau_i} \mathbf{E} \Bigg [ 2 \Big (1 - \prod_{i=1}^d \chi(A_i, B_i, \tau_i) \Big ) - \Big (1 - \prod_{i=1}^d \chi(A_i, A_i', \tau_i) \Big ) - \Big (1 - \prod_{i=1}^d \chi(B_i, B_i', \tau_i \Big ) \Bigg] \\
    & =  \mathbf{E} \Bigg [ \lim_{\tau_i} \Bigg \{ 2 \Big (1 - \prod_{i=1}^d \chi(A_i, B_i, \tau_i) \Big ) - \Big (1 - \prod_{i=1}^d \chi(A_i, A_i', \tau_i) \Big ) - \Big (1 - \prod_{i=1}^d \chi(B_i, B_i', \tau_i \Big ) \Bigg \} \Bigg ] \\
    & = \mathbf{E}\big [2 \times 1[A_i \neq B_i] - 1[A_i \neq A'_i] - 1[B_i \neq B'_i] \big ].
\end{split}
\end{equation}
Here, the second equality follows from the dominated convergence theorem and $1[\mathrm{arg}]$ is defined as in proof of Eq.~\eqref{eqn:more_general} (Lemma~\ref{lem:parseval}).
\end{proof}
\subsubsection{Moving from Real-Valued Discrete Variables to Any Discrete Variables}
\begin{lemma}
\label{lem:bijective_for_all}
Let $\tilde{A}$ and $\tilde{B}$ be any independent, discrete random variables over a countable set $\Omega$ (i.e., not necessarily contained in $\mathbb{R}^d$). Then,
\begin{equation}
    \sum_{x \in \Omega} \lvert \tilde{p}_\alpha(x) - \tilde{p}_\beta(x) \rvert = \varepsilon_{01}(\tilde{A}, \tilde{B}).
\end{equation}
where $\tilde{p}_\alpha$ and $\tilde{p}_\beta$ are the mass functions of $\tilde{A}$ and $\tilde{B}$, respectively.
\end{lemma}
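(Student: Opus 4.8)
The plan is to lift the identity already established for real-valued discrete variables -- the combination of Lemma~\ref{lem:parseval} and Lemma~\ref{lem:energy_as_char_fn} -- to arbitrary discrete variables by an equality-preserving relabeling, which is precisely the job this lemma performs in the paper's progression. Since $\Omega$ is countable I would fix any injection $\phi : \Omega \to \mathbb{R}$ (e.g.\ an enumeration $\Omega = \{\omega_1, \omega_2, \ldots\}$ with $\phi(\omega_n) = n$) and set $A = \phi(\tilde{A})$ and $B = \phi(\tilde{B})$, which are independent real-valued discrete variables of the kind treated in the preceding subsection. The first key observation is that $\varepsilon_{01}$ is a \emph{relabeling invariant}: by Definition~\ref{defn:energy} it is assembled entirely from the equality predicates $1\{\cdot \neq \cdot\}$, and injectivity of $\phi$ gives $\phi(x) = \phi(y) \iff x = y$, so $1\{\tilde{A} \neq \tilde{B}\} = 1\{A \neq B\}$ pointwise, and likewise for the two i.i.d.-copy terms. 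Taking expectations term by term yields $\varepsilon_{01}(\tilde{A}, \tilde{B}) = \varepsilon_{01}(A, B)$.

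Next I would transport the mass functions. The pushforward under $\phi$ identifies $p_\alpha(\phi(x)) = \tilde{p}_\alpha(x)$ and $p_\beta(\phi(x)) = \tilde{p}_\beta(x)$ for $x \in \Omega$, with $p_\alpha, p_\beta$ vanishing off $\phi(\Omega)$; hence $\sum_y g(p_\alpha(y) - p_\beta(y)) = \sum_{x \in \Omega} g(\tilde{p}_\alpha(x) - \tilde{p}_\beta(x))$ for any $g$ with $g(0) = 0$. Applying Lemma~\ref{lem:parseval} and Lemma~\ref{lem:energy_as_char_fn}, whose right-hand sides are the \emph{same} characteristic-function integral, gives $\varepsilon_{01}(A, B) = \sum_y |p_\alpha(y) - p_\beta(y)|^2$; choosing $g(t) = |t|^2$ in the transport identity and combining with the invariance of the previous paragraph then delivers
\[
\varepsilon_{01}(\tilde{A}, \tilde{B}) = \sum_{x \in \Omega} |\tilde{p}_\alpha(x) - \tilde{p}_\beta(x)|^2 .
\]

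One caveat must be flagged about the statement as displayed: the identity holds with the absolute differences \emph{squared}, not as the bare sum $\sum_x |\tilde{p}_\alpha(x) - \tilde{p}_\beta(x)|$ typeset above. Indeed, a one-line expansion of Definition~\ref{defn:energy} under independence -- using $\mathbf{E}[1\{\tilde{A} \neq \tilde{B}\}] = 1 - \sum_x \tilde{p}_\alpha(x)\tilde{p}_\beta(x)$ together with the analogous self-comparison terms -- collapses directly to $\sum_x (\tilde{p}_\alpha(x) - \tilde{p}_\beta(x))^2$, confirming the exponent is $2$ and, incidentally, furnishing a proof that bypasses the Fourier machinery entirely (this expansion is already relabeling-invariant). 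A Bernoulli$(p)$ versus Bernoulli$(q)$ comparison makes the discrepancy concrete: $\sum_x |\tilde{p}_\alpha - \tilde{p}_\beta| = 2|p - q|$ while $\varepsilon_{01} = 2(p - q)^2$, which agree only when $|p - q| \in \{0, 1\}$. I therefore read the displayed right-hand side as the squared discrepancy.

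The main obstacle is not any single hard estimate but the bookkeeping of the relabeling argument: one must verify that the pushforward genuinely identifies the two mass functions on $\phi(\Omega)$ and annihilates them elsewhere, that each of the three countable sums defining $\varepsilon_{01}$ converges absolutely so the term-by-term passage through $\phi$ is legitimate, and that the condition $g(0) = 0$ is exactly what lets the relabeled sum ignore the complement of $\phi(\Omega)$ in $\mathbb{R}$. Granted these routine checks, the invariance of $\varepsilon_{01}$ and of the summed discrepancy under $\phi$ reduces the general statement to the real-valued case already in hand.
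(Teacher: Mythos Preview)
Your proposal is correct and follows essentially the same route as the paper: both arguments fix an injection from $\Omega$ into Euclidean space, verify that $\varepsilon_{01}$ and the summed mass-function discrepancy are invariant under this relabeling (via injectivity for the indicator identities and the pushforward for the mass functions), and then invoke Lemmas~\ref{lem:parseval} and~\ref{lem:energy_as_char_fn} on the real-valued images.

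Your caveat about the missing square is well taken and worth recording. Lemma~\ref{lem:parseval} yields $\sum_x |p_\alpha(x)-p_\beta(x)|^2$, not the unsquared sum, so combining it with Lemma~\ref{lem:energy_as_char_fn} produces the squared identity; your Bernoulli counterexample confirms the unsquared version is false in general. The paper's own proof repeats this slip in its final display, but its downstream use of the lemma in Theorem~\ref{thm:main} applies it in the correct squared form, replacing $\bigl(\sum_x |\tilde p_\alpha(x)-\tilde p_\beta(x)|^2\bigr)^{1/2}$ by $\sqrt{\varepsilon_{01}(\tilde A,\tilde B)}$ after Cauchy--Schwarz. So the exponent $2$ is indeed intended, and your reading is the right one. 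The direct expansion you sketch---writing each indicator expectation as $1-\sum_x \tilde p_\alpha(x)\tilde p_\beta(x)$ etc.\ and collapsing to $\sum_x(\tilde p_\alpha(x)-\tilde p_\beta(x))^2$---is a clean self-contained alternative that the paper does not give; it has the virtue of making the squared form and the relabeling invariance simultaneously transparent without any Fourier input.
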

\begin{proof}
Let $\Pi \subset \mathbb{R}^d$ with $\lvert \Pi \rvert = \lvert \Omega \rvert$. Note, $\Pi$ exists because $\Omega$ is countable and $\mathbb{R}^d$ is not. Next, let $f : \Omega \to \Pi$ be any bijective map. 

Then, supposing $p_\alpha$ and $p_\beta$ are the mass functions of $f(\tilde{A})$ and $f(\tilde{B})$ respectively, by definition of the pushforward measure, for any $y \in \Pi$ such that $y = f(x)$ for $x \in \Omega$
\begin{equation}
\label{eqn:mass_fns_equal}
    p_\alpha(y) = \tilde{p}_\alpha(\{a \in \Omega \mid f(a) = y\}) = \tilde{p}_\alpha(x).
\end{equation}
Notice, bijectivity of $f$ ensures the last step, because each $y \in \Pi$ has a \textit{unique} inverse $x \in \Omega$. From bijectivity of $f$, we also have injectivity, which implies $1[a\neq b] = 1[f(a) \neq f(b)]$ for all $a,b \in \Omega$. By simple substitution, the previous two facts tells us
\begin{equation}
\begin{split}
    & 2\sum_{a,b \in \Omega} 1[a \neq b] \tilde{p}_\alpha(a)\tilde{p}_\beta(b) - \sum_{a,a' \in \Omega} 1[a \neq a'] \tilde{p}_\alpha(a)\tilde{p}_\alpha(a') - \sum_{b,b' \in \Omega} 1[b \neq b'] \tilde{p}_\beta(b)\tilde{p}_\beta(b') \\
    & = 2\sum_{a,b \in \Omega} 1[f(a) \neq f(b)] p_\alpha(f(a))p_\beta(f(b)) - \sum_{a,a' \in \Omega} 1[f(a) \neq f(a)'] p_\alpha(f(a))p_\alpha(f(a')) \\
    &\hspace{3em} - \sum_{b,b' \in \Omega} 1[f(b) \neq f(b')] p_\beta(f(b))p_\beta(f(b'))
\end{split}
\end{equation}
Since $f$ is surjective too (i.e., along with injective), summation of any function $g(f(a), f(b))$ over $a,b \in \Omega$ and summation of $g(c,d)$ over $c,d \in \Pi$ are equivalent.\footnote{In particular, because $f$ is surjective, we know all pairs $(c,d) \in \Pi^2$ have some pair $(a,b) \in \Omega^2$ for which $(f(a),f(b)) = (c,d)$; i.e., we do not ``miss'' a term in this sum. Because $f$ is injective, we know all pairs $(c,d) \in \Pi^2$ have \textit{only one} pair $(a,b) \in \Omega^2$ for which $(f(a),f(b)) = (c,d)$; i.e., we do not ``repeat'' a term in this sum.} So, we can continue as follows:
\begin{equation}
\begin{split}
    & 2\sum_{a,b \in \Omega} 1[f(a) \neq f(b)] p_\alpha(f(a))p_\beta(f(b)) - \sum_{a,a' \in \Omega} 1[f(a) \neq f(a)'] p_\alpha(f(a))p_\alpha(f(a')) \\
    &\hspace{3em} - \sum_{b,b' \in \Omega} 1[f(b) \neq f(b')] p_\beta(f(b))p_\beta(f(b')) \\
    & = 2\sum_{c,d \in \Pi} 1[c \neq d] p_\alpha(c)p_\beta(d) - \sum_{c,c' \in \Omega} 1[c \neq c'] p_\alpha(c)p_\alpha(c') - \sum_{d,d' \in \Omega} 1[d \neq d'] p_\beta(d)p_\beta(d') \\
\end{split}
\end{equation}
In other words, the previous two equations tell us $\varepsilon_{01}(\tilde{A}, \tilde{B}) = \varepsilon_{01}(f(\tilde{A}), f(\tilde{B}))$. Applying equivalence of the mass functions, then Lemmas~\ref{lem:parseval} and \ref{lem:energy_as_char_fn}, then equivalence of the energies:
\begin{equation}
    \sum_{x \in \Omega} \lvert \tilde{p}_\alpha(x) - \tilde{p}_\beta(x) \rvert = \sum_{y \in \Pi } \lvert p_\alpha(y) - p_\beta(y) \rvert = \varepsilon_{01}(f(\tilde{A}), f(\tilde{B})) = \varepsilon_{01}(\tilde{A}, \tilde{B}).  
\end{equation}
Note, this uses the fact that functions of independent random variables are also independent.
\end{proof}
\subsubsection{The Main Bound}
\begin{theorem}
\label{thm:main}
Let $A$ and $B$ be any independent random variables over any space $\mathcal{X}$ and let $S$, $S'$ be random variables over $[0,1]$. Let $U$ be a random variable, independent from $A$ and $B$, over any set $\mathcal{U}$. Suppose $c : \mathcal{X} \to \Omega$ is a coarsening function (so, $\Omega \subset \mathcal{X})$ and let $f \in [0,1]^{\mathcal{X} \times \mathcal{U}}$. Then,
\begin{equation}
     \mathbf{E}[\lvert S - f(A, U)\rvert] \leq \gamma + \varphi + \mathbf{E}[\lvert S' - f(B, U)\rvert] + \sqrt{\varepsilon_c(A, B) \times \delta}
\end{equation}
where
\begin{equation}
\begin{split}
     \gamma & = \mathbf{E}[\lvert f(c(B), U) - f(B)\rvert] + \mathbf{E}[\lvert f(c(A), U) - f(A)\rvert], \\
     g & \in \argmin_{h \in [0,1]^{\mathcal{X} \times \mathcal{U}} }\mathbf{E}[\lvert S - h(c(A), U) \rvert] + \mathbf{E}[\lvert h(c(B), U) - S' \rvert], \\
     \varphi & = \mathbf{E}[\lvert S - g(c(A), U) \rvert] + \mathbf{E}[\lvert g(c(B), U) - S' \rvert], \\
     \delta & = \sum_{x \in \Omega} \lvert g(x) - f(x) \rvert^2
\end{split}
\end{equation}
\end{theorem}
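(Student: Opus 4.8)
The plan is to reach the bound by a chain of triangle inequalities that peel off the $\varphi$ and $\gamma$ pieces one at a time, leaving a single ``transfer'' term $\mathbf{E}[\lvert g(c(A),U) - f(c(A),U)\rvert]$, and then to move this transfer term from $A$ to $B$ at a cost of exactly $\sqrt{\varepsilon_c(A,B)\,\delta}$ by invoking Lemma~\ref{lem:bijective_for_all} together with Cauchy-Schwarz. Throughout, $g$ may be \emph{any} element of $[0,1]^{\mathcal{X}\times\mathcal{U}}$; specializing to the stated argmin is done only at the end and merely makes $\varphi$ as small as possible, so it plays no role in the argument's validity. The only bookkeeping needed: every sum over $\Omega$ is finite since a coarsening function has finite image; $U$ is independent of $A$ and $B$, hence of $c(A)$ and $c(B)$; and $c(A),c(B)$ are independent, being functions of the independent pair $A,B$.

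First, a double application of the triangle inequality yields
\begin{equation}
\begin{split}
& \mathbf{E}[\lvert S - f(A,U)\rvert] \le \mathbf{E}[\lvert S - g(c(A),U)\rvert] + \mathbf{E}[\lvert g(c(A),U) - f(c(A),U)\rvert] \\
& \qquad {}+ \mathbf{E}[\lvert f(c(A),U) - f(A,U)\rvert],
\end{split}
\end{equation}
whose first summand is the $A$-part of $\varphi$ and whose last summand is the $A$-part of $\gamma$; and, inserting $S'$ as an intermediate point,
\begin{equation}
\begin{split}
& \mathbf{E}[\lvert g(c(B),U) - f(c(B),U)\rvert] \le \mathbf{E}[\lvert g(c(B),U) - S'\rvert] + \mathbf{E}[\lvert S' - f(B,U)\rvert] \\
& \qquad {}+ \mathbf{E}[\lvert f(B,U) - f(c(B),U)\rvert],
\end{split}
\end{equation}
whose first summand is the $B$-part of $\varphi$, last summand the $B$-part of $\gamma$, and middle summand the source error $\mathbf{E}[\lvert S' - f(B,U)\rvert]$. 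Hence it remains only to prove the \emph{transfer inequality}
\begin{equation}
\mathbf{E}[\lvert g(c(A),U) - f(c(A),U)\rvert] \le \mathbf{E}[\lvert g(c(B),U) - f(c(B),U)\rvert] + \sqrt{\varepsilon_c(A,B)\,\delta},
\end{equation}
as chaining the three displays then delivers $\mathbf{E}[\lvert S - f(A,U)\rvert] \le \gamma + \varphi + \mathbf{E}[\lvert S' - f(B,U)\rvert] + \sqrt{\varepsilon_c(A,B)\,\delta}$.

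For the transfer inequality, put $w(x) = \mathbf{E}_U[\lvert g(x,U) - f(x,U)\rvert] \in [0,1]$, which is well defined and $[0,1]$-valued since $f,g$ are $[0,1]$-valued. Since $c(A)$ is discrete on $\Omega$ and independent of $U$, Fubini-Tonelli gives $\mathbf{E}[\lvert g(c(A),U) - f(c(A),U)\rvert] = \sum_{x\in\Omega} p_{c(A)}(x)\,w(x)$, and likewise with $B$; subtracting and applying Cauchy-Schwarz over the finite index set $\Omega$,
\begin{equation}
\begin{split}
& \mathbf{E}[\lvert g(c(A),U) - f(c(A),U)\rvert] - \mathbf{E}[\lvert g(c(B),U) - f(c(B),U)\rvert] \\
& \qquad \le \Big( \sum_{x\in\Omega} \big(p_{c(A)}(x) - p_{c(B)}(x)\big)^2 \Big)^{1/2} \Big( \sum_{x\in\Omega} w(x)^2 \Big)^{1/2}.
\end{split}
\end{equation}
By Lemma~\ref{lem:bijective_for_all}, applied to the independent discrete variables $c(A),c(B)$ over the countable set $\Omega$, the first factor squared equals $\varepsilon_{01}(c(A),c(B)) = \varepsilon_c(A,B)$. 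For the second factor, $w(x)\in[0,1]$ forces $w(x)^2 \le w(x)$, so $\sum_{x\in\Omega} w(x)^2 \le \sum_{x\in\Omega} w(x) = \mathbf{E}[\sum_{x\in\Omega}\lvert g(x,U) - f(x,U)\rvert] = \delta$, matching the main-text form of $\delta$ (when $f,g$ are $U$-free this last inequality is unnecessary and one reads off $\delta = \sum_{x\in\Omega}\lvert g(x) - f(x)\rvert^2$ directly). Either way the transfer inequality follows, completing the proof.

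Internal to the theorem, then, the argument is just triangle inequality plus one Cauchy-Schwarz step decoupling the distribution shift from the function difference, and I expect no obstacle there beyond the finiteness and independence bookkeeping noted at the outset. The load-bearing ingredient is instead the identity $\sum_{x\in\Omega}(p_{c(A)}(x) - p_{c(B)}(x))^2 = \varepsilon_{01}(c(A),c(B))$ supplied by Lemma~\ref{lem:bijective_for_all}: this is precisely what converts a $\chi^2$-type, distribution-level quantity (uncomputable without the test functions) into the computable discrete energy statistic, and it itself rests on the Parseval-Plancherel-type results for characteristic functions of discrete real-valued distributions (Lemmas~\ref{lem:parseval} and~\ref{lem:energy_as_char_fn}) together with the bijective transport of an arbitrary countable $\Omega$ into $\mathbb{R}^d$. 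With those lemmas already established, the remaining work is the routine assembly above.
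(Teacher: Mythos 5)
Your proof is correct and follows essentially the same route as the paper's: a chain of triangle inequalities that peels off the $\gamma$ and $\varphi$ terms, followed by a Fubini factorization and Cauchy--Schwarz to bound the residual $\mathbf{E}[\lvert g(c(A),U)-f(c(A),U)\rvert]-\mathbf{E}[\lvert g(c(B),U)-f(c(B),U)\rvert]$ by $\sqrt{\varepsilon_c(A,B)\,\delta}$ via Lemma~\ref{lem:bijective_for_all}. Your explicit reconciliation of the two forms of $\delta$ (using $w(x)^2\le w(x)$ for $w(x)\in[0,1]$, and noting the $U$-free case) is in fact slightly more careful than the paper's, which passes over this point silently.
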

\begin{proof}
For any $g \in {[0,1]}^{\mathcal{X} \times \mathcal{U}}$, by way of the triangle inequality and monotonicity of the expectation,
\begin{equation}
\label{eqn:large_triangle_ineq}
\begin{split}
    & \mathbf{E}[\lvert S - f(A, U)\rvert] = \mathbf{E}[\lvert S - f(A, U)\rvert] + \mathbf{E}[\lvert S' - f(B, U)\rvert] - \mathbf{E}[\lvert S' - f(B, U)\rvert] \\
    & = \mathbf{E}[\lvert S - g(c(A), U) + g(c(A), U) - f(A, U)\rvert] + \mathbf{E}[\lvert S' - f(B, U)\rvert] - \mathbf{E}[\lvert S' - f(B, U)\rvert] \\
    & \leq \mathbf{E}[\lvert S - g(c(A), U) \rvert] + \mathbf{E}[\lvert g(c(A), U) - f(A, U)\rvert] + \mathbf{E}[\lvert S' - f(B, U)\rvert] \\
    & \hspace{3em}- \mathbf{E}[\lvert S' - f(B, U)\rvert] \\
    & \leq \mathbf{E}[\lvert S - g(c(A), U) \rvert] + \mathbf{E}[\lvert g(c(A), U) - f(A, U)\rvert] + \mathbf{E}[\lvert S' - f(B, U)\rvert] \\ 
    & \hspace{3em} - \mathbf{E}[\lvert g(c(B), U) - f(B, U)\rvert] + \mathbf{E}[\lvert g(c(B), U) - S' \rvert] \\
    & \leq \mathbf{E}[\lvert S - g(c(A), U) \rvert] + \mathbf{E}[\lvert g(c(A), U) - f(c(A), U) \rvert] + \mathbf{E}[\lvert f(c(A), U) - f(A, U)\rvert] \\ 
    & \hspace{3em} + \mathbf{E}[\lvert S' - f(B, U)\rvert] - \mathbf{E}[\lvert g(c(B), U) - f(B, U)\rvert] + \mathbf{E}[\lvert g(c(B), U) - S' \rvert] \\
    & \leq \mathbf{E}[\lvert S - g(c(A), U) \rvert] + \mathbf{E}[\lvert g(c(A), U) - f(c(A), U)\rvert] + \mathbf{E}[\lvert f(c(A), U) - f(A, U)\rvert] \\ 
    & \hspace{3em}+ \mathbf{E}[\lvert S' - f(B, U)\rvert] - \mathbf{E}[\lvert g(c(B, U) - f(c(B), U) \rvert] \\
    & \hspace{3em} + \mathbf{E}[\lvert f(c(B), U) - f(B, U)\rvert] + \mathbf{E}[\lvert g(c(B), U) - S' \rvert]. \\
\end{split}
\end{equation}
Set $\tilde{B} = c(B)$, $\tilde{A} = c(A)$ and set
\begin{equation}
\begin{split}
    \gamma & = \mathbf{E}[\lvert f(\tilde{B}, U) - f(B, U)\rvert] + \mathbf{E}[\lvert f(\tilde{A}, U) - f(A, U)\rvert], \\
    g & \in \argmin_{h \in [0,1]^{\mathcal{X}\times\mathcal{U}} }\mathbf{E}[\lvert S - h(\tilde{A}, U) \rvert ] + \mathbf{E}[\lvert h(\tilde{B}, U) - S' \rvert], \\
    \varphi & = \mathbf{E}[\lvert S - g(\tilde{A}, U) \rvert ] + \mathbf{E}[\lvert g(\tilde{B}, U) - S' \rvert].
\end{split}
\end{equation}
Then, Eq.~\eqref{eqn:large_triangle_ineq} implies
\begin{equation}
    \mathbf{E}[\lvert S - f(A, U)\rvert] \leq \gamma + \varphi + \mathbf{E}[\lvert S' - f(B, U)\rvert] + \mathbf{E}[\lvert g(\tilde{A}, U) - f(\tilde{A}, U)\rvert] - \mathbf{E}[\lvert g(\tilde{B}, U) - f(\tilde{B}, U)\rvert].
\end{equation}
Now, suppose $\tilde{p}_\alpha$ and $\tilde{p}_\beta$ are probability mass functions for $\tilde{A}$ and $\tilde{B}$, respectively. Then, using basic properties of the expectation along with other noted facts,
\begin{equation}
\begin{split}
    & \mathbf{E}[\lvert g(\tilde{A}, U) - f(\tilde{A}, U)\rvert] - \mathbf{E}[\lvert g(\tilde{B}, U) - f(\tilde{B}, U)\rvert] \\
    & = \mathbf{E}\Big [ \sum_{a \in \Omega} \lvert g(a, U) - f(a, U)\rvert \tilde{p}_\alpha(a) - \sum_{b \in \Omega} \lvert g(b, U) - f(b, U)\rvert \tilde{p}_\beta(b) \Big ] \quad\text{(Fubini)}\\
    & = \mathbf{E}\Big [ \sum_{x \in \Omega} \lvert g(x, U) - f(x, U)\rvert (\tilde{p}_\alpha(x) - \tilde{p}_\beta(x)) \Big ] \leq \mathbf{E}\Big [ \sum_{x \in \Omega} \lvert g(x, U) - f(x, U)\rvert \lvert \tilde{p}_\alpha(x) - \tilde{p}_\beta(x) \rvert \Big ]\\
    & \leq \mathbf{E} \Bigg [\Bigg ( \sum_{x \in \Omega} \lvert g(x, U) - f(x, U) \rvert^2 \Bigg )^{1/2} \Bigg ( \sum_{x\in\Omega} \lvert \tilde{p}_\alpha(x) - \tilde{p}_\beta(x) \rvert^2 \Bigg )^{1/2} \ \Bigg ]\qquad \text{(Cauchy-Schwarz)} \\
    & \leq \sqrt{\varepsilon_{01}(\tilde{A}, \tilde{B})}  \times \mathbf{E} \Bigg [ \Bigg ( \sum_{x \in \Omega} \lvert g(x, U) - f(x, U) \rvert^2 \Bigg )^{1/2} \ \Bigg ] \qquad \text{(Lemma~\ref{lem:bijective_for_all})}
\end{split}
\end{equation}
In the last step, we may apply Lemma~\ref{lem:bijective_for_all} because $\tilde{A}$ and $\tilde{B}$ are still independent (i.e., they are functions of independent random variables) and are now discrete too. Defining $\delta$ appropriately yields the result.
\end{proof}
\subsubsection{Proof of Thm.~\ref{thm:main_cor} and Other Applications of Thm.~\ref{thm:main}}
\paragraph{Thm.~\ref{thm:main_cor}}
Thm.~\ref{thm:main_cor} is simply a specification of Thm.~\ref{thm:main} above. In fact, it is better stated as a corollary of Thm.~\ref{thm:main}. We set $\mathcal{X} = \mathcal{D}$, leave $\mathcal{U}$ and its variable $U$ unchanged, and set $S = S' = h_\ell(D, U)$. Then, $A = \tilde{D}_1$ and $B = \tilde{D}_2$. Taking $f = h_\ell$ yields the result. 
\paragraph{Classification and Regression}
In adaptation for classification and regression, we consider a source distribution $\mathbb{S}$ governing random variables $(X_S, Y_S)$ and a target distribution $\mathbb{T}$ governing random variables  $(X_T, Y_T)$. In general, the goal is to predict $Y_\square$ from $X_\square$. We can set $S = Y_T$ and $S' = Y_S$. We may also set $A = X_T$ and $B = X_S$. Then, we learn $f$ from a pre-specified \textit{hypothesis class} $\mathcal{H} \subseteq [0,1]^{\mathcal{X} \times \mathcal{U}}$. Typically, $U$ is ignored in these settings, but it seems possible to employ this term to model stochastic (Gibbs) predictors; i.e., in PAC-Bayesian Frameworks \citep{germain2020pac, sicilia2022pac}. 
Notice, for regression, our framework only considers a normalized response variable and the mean absolute error. 
\subsubsection{Sample Complexity}
As alluded in Section~\ref{sec:conclusion}, a key shortcoming of our framework compared to existing frameworks is the absence of any terms measuring \textit{sample-complexity}. That is, we do not explicitly quantify the difference between our empirical observation of the energy and the \textit{true} energy (i.e., the \textit{population} version of the statistic) using the number of samples in our observation. This is a big part of computational learning theory, as the act of choosing a function $f$ \textit{using data} -- or, in dialogue contexts, choosing the parameter $\theta$ using data -- can have significant impact on the difference between our observations of a statistical processes and reality. In fact, this impact is the basis of overfitting and, besides computational efficiency, is the main pillar of study in traditional PAC learning\footnote{Probably Approximately Correct learning} \citep{valiant1984theory, shalev2014understanding}. In more recent studies of domain adaptation, like our work, the population-only bound can be just as important for purpose of understanding and interpretation. Furthermore, if we only care about the empirical samples in-hand, these population-only bounds are directly applicable,\footnote{The empirical sample becomes the whole population about which we are concerned.} which partly explains the empirical effectiveness of our theory in Section~\ref{sec:results}. Nonetheless, the role of sample-complexity can be very informative and useful in practice \citep{perez2021tighter}
and would be important for model-selection applications as described at the end of Appendix~\ref{sec:theory2}. We leave investigation of sample-complexity as future work. As we are aware, there is currently no appropriate description of sample-complexity for dialogue generation contexts.
\newpage
\section{Statistics on Dataset}
\label{sec:dataset}
\begin{table}[h]
    \centering\footnotesize
    \begin{tabular}{cccccc}
         \textbf{unique images} & \textbf{unique objects} &  \textbf{words (+1 occurrences)} & \textbf{words (+3 occurrences)} & \textbf{questions} \\ \toprule
         67K & 134K & 19K & 6.6K & 277K
    \end{tabular}
    \caption{\small Statistics on \textit{GuessWhat?!}. For more information (e.g., train/test splits) see original proposal \citep{de2017guesswhat}. }
    \label{tab:datastats}
\end{table}
\begin{figure*}[h]
    \centering
    \includegraphics[width=\textwidth]{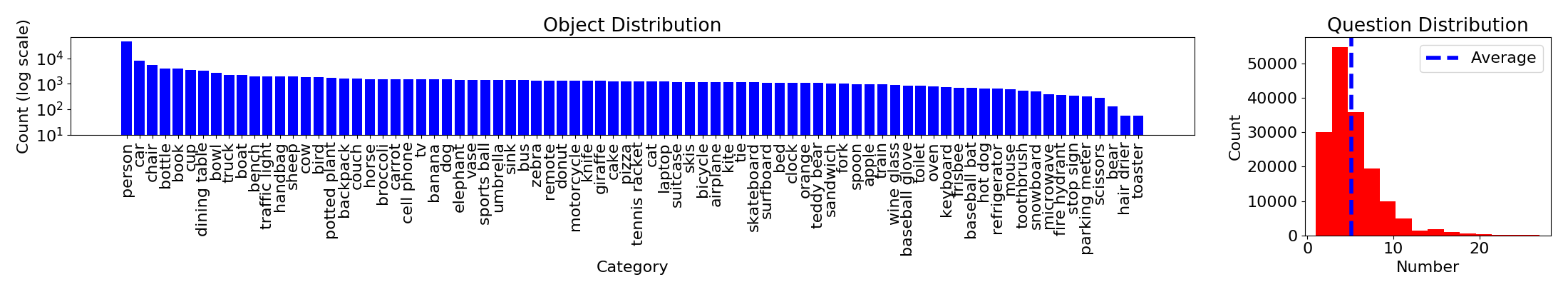}
    \caption{\small Visualization of object counts and dialogue length in \textit{GuessWhat?!} dataset.}
    \label{fig:orig-datastats}
\end{figure*}
\bibliography{anthology,custom}
\end{document}